\def\eqref#1{equation~\ref{#1}}
\def\ceil#1{\lceil #1 \rceil}
\def\floor#1{\lfloor #1 \rfloor}
\def\1{\bm{1}}
\def\mA{{\bm{A}}}
\def\mB{{\bm{B}}}
\def\mF{{\bm{F}}}
\def\mI{{\bm{I}}}
\def\mL{{\bm{L}}}
\def\mP{{\bm{P}}}
\def\mQ{{\bm{Q}}}
\def\mR{{\bm{R}}}
\def\mS{{\bm{S}}}
\def\mV{{\bm{V}}}
\DeclareMathAlphabet{\mathsfit}{\encodingdefault}{\sfdefault}{m}{sl}
\SetMathAlphabet{\mathsfit}{bold}{\encodingdefault}{\sfdefault}{bx}{n}
\def\gC{{\mathcal{C}}}
\def\gL{{\mathcal{L}}}
\def\gN{{\mathcal{N}}}
\def\gU{{\mathcal{U}}}
\def\gW{{\mathcal{W}}}
\def\gX{{\mathcal{X}}}
\def\gY{{\mathcal{Y}}}
\def\sZ{{\mathbb{Z}}}
\newcommand{\E}{\mathbb{E}}
\newcommand{\norm}[1]{\left\Vert#1\right\Vert}
\newcommand{\abs}[1]{\left\vert#1\right\vert}
\newcommand{\set}[1]{\left\{#1\right\}}
\newcommand{\brac}[1]{\left [#1\right ]}
\newcommand{\Real}{\mathbb R}
\newcommand{\one}{\mathbf{1}}
\def\floor#1{\lfloor #1 \rfloor}
\theoremstyle{plain}
\newtheorem{theorem}{Theorem}[section]
\newtheorem{lemma}[theorem]{Lemma}
\theoremstyle{definition}
\theoremstyle{remark}
\title{Space Group Conditional Flow Matching}
\author{Omri Puny \\
Weizmann Institute of Science \\
\And
Yaron Lipman \\
FAIR at Meta \\
\And
Benjamin Kurt Miller \\
FAIR at Meta
}
\begin{document}

\maketitle

\begin{abstract}
Inorganic crystals are periodic, highly-symmetric arrangements of atoms in three-dimensional space. Their structures are constrained by the symmetry operations of a crystallographic \emph{space group} and restricted to lie in specific affine subspaces known as \emph{Wyckoff positions}. The frequency an atom appears in the crystal and its rough positioning are determined by its Wyckoff position. Most generative models that predict atomic coordinates overlook these symmetry constraints, leading to unrealistically high populations of proposed crystals exhibiting limited symmetry.
We introduce Space Group Conditional Flow Matching, a novel generative framework that samples significantly closer to the target population of highly-symmetric, stable crystals. We achieve this by conditioning the entire generation process on a given space group and set of Wyckoff positions; specifically, we define a conditionally symmetric noise base distribution and a group-conditioned, equivariant, parametric vector field that restricts the motion of atoms to their initial Wyckoff position. Our form of group-conditioned equivariance is achieved using an efficient reformulation of \emph{group averaging} tailored for symmetric crystals. Importantly, it reduces the computational overhead of symmetrization to a negligible level.
We achieve state of the art results on crystal structure prediction and de novo generation benchmarks. We also perform relevant ablations.
\end{abstract}

\section{Introduction}
Crystals are solid materials characterized by a periodic arrangement of their constituent atoms. The crystalline structure is fundamentally represented by three components: lattice parameters (defining the geometry of the repeating unit cell), fractional coordinates (specifying the position of each atom within the cell), and the identity of the atom at each location. The discovery of novel crystalline 
structures is critical for material design and
recent progress in generative modeling has demonstrated a promising approach to this problem. However, most existing generative methods overlook key crystallographic properties, the space group and Wyckoff positions, making it challenging for them to generate non-trivial symmetric crystals.

A crystal's space group, a subgroup of the Euclidean group $E(n)$, fully describes the symmetry of the atoms arranged within the unit cell. Beyond its correlation with many optical, electrical, magnetic, and structural
properties \citep{Chen2022Topological_semimetal,Tang2019topological_materials, malgrange2014symmetry, yang2005introduction}, the space group imposes constraints on atomic locations and lattice structure. These manifests in form of Wyckoff positions, which are sets of symmetrically equivalent points within a unit cell. More generally, the Wyckoff positions of a space group partition the unit cell according to the structure of the orbits induced by the group (see \cref{fig:schematic} for a $2$D example).

 In this work, we develop a generative model that samples crystals conditioned on a given space group and associated Wyckoff positions. This approach offers two key benefits: (1) it provides greater control over the structure and symmetry of the generated crystals and, and (2) it can leverage the lower-dimensional constraints imposed by Wyckoff positions for improving generation. In contrast to prior methods that incorporate space group information but rely on projection steps to correct atomic placements \citep{jiao2024space, levy2025symmcd}, our model is designed to inherently preserve the assignment of atoms to their designated Wyckoff positions throughout the generation process. 

 Our proposed model, \textit{\underline{S}pace \underline{G}roup Conditional \underline{F}low \underline{M}atching} (SGFM), is based on the \textit{Flow Matching} (FM) generative framework \citep{albergo2022building,liu2022flow,lipman2022flow}. We chose the FM framework for two key reasons: it allows us to use customized source distributions and provides known conditions for generating data that respects specified symmetries \citep{kohler2020}.
Based on these advantages, we designed SGFM with two main components: A space group and Wyckoff position conditioned noise prior, which have positive support only for crystal structures that adhere to the symmetry constraints described by the Wyckoff positions; A group conditioned equivariant vector field, which is a single neural network architecture that is able to support arbitrary space group equivariance. Equivariance is achieved through Group Averaging (GA) \citep{yarotsky2022universal}, a symmetrization technique that projects arbitrary functions onto their equivariant versions. Although GA is typically computationally expensive and impractical, we introduce an efficient formulation tailored for symmetric crystals, reducing the computational overhead of the symmetrization operator to a negligible level.

The main contributions of this work are as follows:
\begin{itemize}
    \item We formalized the problem of symmetric crystal generation in terms of distributional symmetry properties (\cref{ssec:problem}), and extended the conditions introduced by \citet{kohler2020} to enable flow-based models to sample from such distributions (\cref{ssec:conditions}).
    \item We instantiate this flow model as SGFM (\cref{ssec:SGFM}), which consists of a noise prior conditional on Wyckoff position along with a space group-equivariant vector field, ensuring that the generated crystals preserve the specified symmetries.
    \item We propose a novel and efficient implementation of GA for symmetric crystals, equivalent in output to the standard GA but significantly more efficient (see \cref{fig:main_fig} (c)), practically minimizing the computational burden of symmetrization for symmetric crystals.
    \item 
    SGFM achieves state-of-the-art performance on crystal structure prediction (CSP), a generative task centered on crystal structure generation, and de novo generation (DNG).
\end{itemize}

\begin{figure}
    \centering
\includegraphics[width=0.9\linewidth]{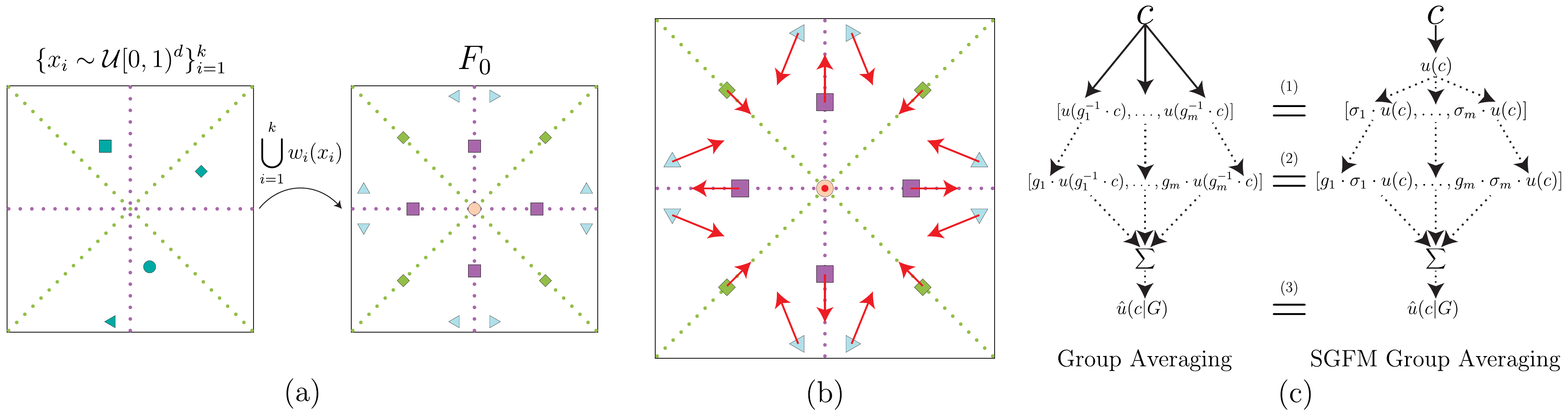}
\vspace{-10pt}
    \caption{Visualization of the main components of SGFM. (a) Wyckoff position noise prior. General points are sampled randomly and projected according to the conditioned Wyckoff positions. (b) Space group equivariant vector field. The equivariance of the model combined with the $G$-symmetry of the input crystal ensures that atoms preserve their symmetry structure. (c) Comparison between group averaging and our optimization, heavy arrow implies expensive model forward pass.}
    \label{fig:main_fig}
    \vspace{-10pt}
\end{figure}
\section{Preliminaries}\label{sec:Preliminaries}

\textbf{Equivariance \& Invariance.}
A function $\varphi:\gX\to\gY$ is equivariant with respect to a group $G$ if the action of any group element on the input corresponds to a consistent transformation of the output. Equivariance implies $\varphi(g\cdot x)=g\cdot \varphi(x)$ for all $x\in \gX$ and $g\in G$. Invariance is a simplified case of equivariance, with all $g \in G$ mapping to a trivial group action on the output space $\varphi(g\cdot x)=\varphi(x)$. Invariance and equivariance also extend to group products: Let $(g_1,g_2)\in G_1\times G_2$, $\varphi$ is $G_1\times G_2$ equivariant if $\varphi((g_1,g_2)\cdot x)=(g_1,g_2)\cdot \varphi(x)$. Additionally,  \emph{$G$-invariant distributions} refer to distributions which have an $G$-invariant density function. We will use this to construct SGFM.

\textbf{Crystal Representation.}
 A crystal can be represented using the tuple $c' = (\mL, \mF, \mA)\in \gC'$, where $\mL \in \Real^{3 \times 3}$ is a positive-determinant, invertible matrix defining the lengths, angles, and orientation of the positive-volume unit cell; $\mF \in [0,1)^{n \times 3}$ denotes the fractional coordinates of $n$ atoms within the unit cell; and $\mA \in \set{0,1}^{n \times h}$ is a one-hot matrix indicating the atom types in $\mF$ from a set of $h$ atom types.
We adopt the space group–conditioned lattice parameterization proposed by \citet{jiao2024space}, which replaces 
$\mL$ with a rotation-invariant vector $k\in\Real^6$ of coefficients of a symmetric matrix basis, and represent a crystal as $c=(k,\mF,\mA) \in \gC$. Further details provided in \cref{app:lattice}.

\textbf{Symmetries \& Crystals.}
Our method focuses on how symmetry groups act on crystals.
The permutation group $S_n$ acts on $c$ by permuting the rows of $\mF$ and $\mA$. Namely, if $\sigma\in S_n$ is represented by a permutation matrix $\mP\in\set{0,1}^{n\times n}$ then $\sigma\cdot c = (k,\mP\mF,\mP\mA)$. 
The group of isometries of $\mathbb{R}^{3}$ known as the Euclidean group $E(3)$, acts on $c$ by applying an orthogonal transformation $\mR\in O(3)$ and a translation $\tau\in\Real^3$ to the fractional coordinates. 
For a group element $g=(\mR,\tau)\in E(3)$, the action  is defined as $g\cdot c=(k,\mF\mR^T+\one_n\tau^T-\floor{\mF\mR^T+\one_n\tau^T},\mA)$ where $\one_n\in\set{1}^n$ is a column vector of ones and $\floor{\cdot}$ is the element-wise floor function.
We further specify the action of the product group on $c$. Let  $(g,\sigma)\in G\times S_n$, we define $(g,\sigma)\cdot c \coloneqq g\cdot(\sigma\cdot c)$.
\citet{puny2021frame} showed that if $G\leqslant E(3)$ then $g\cdot(\sigma\cdot c) = \sigma\cdot(g \cdot c)$, i.e., the operators commute.

\begin{wrapfigure}[20]{r}{0.4\textwidth}
  \centering
  \vspace{-18pt}
  \includegraphics[width=0.3\textwidth]{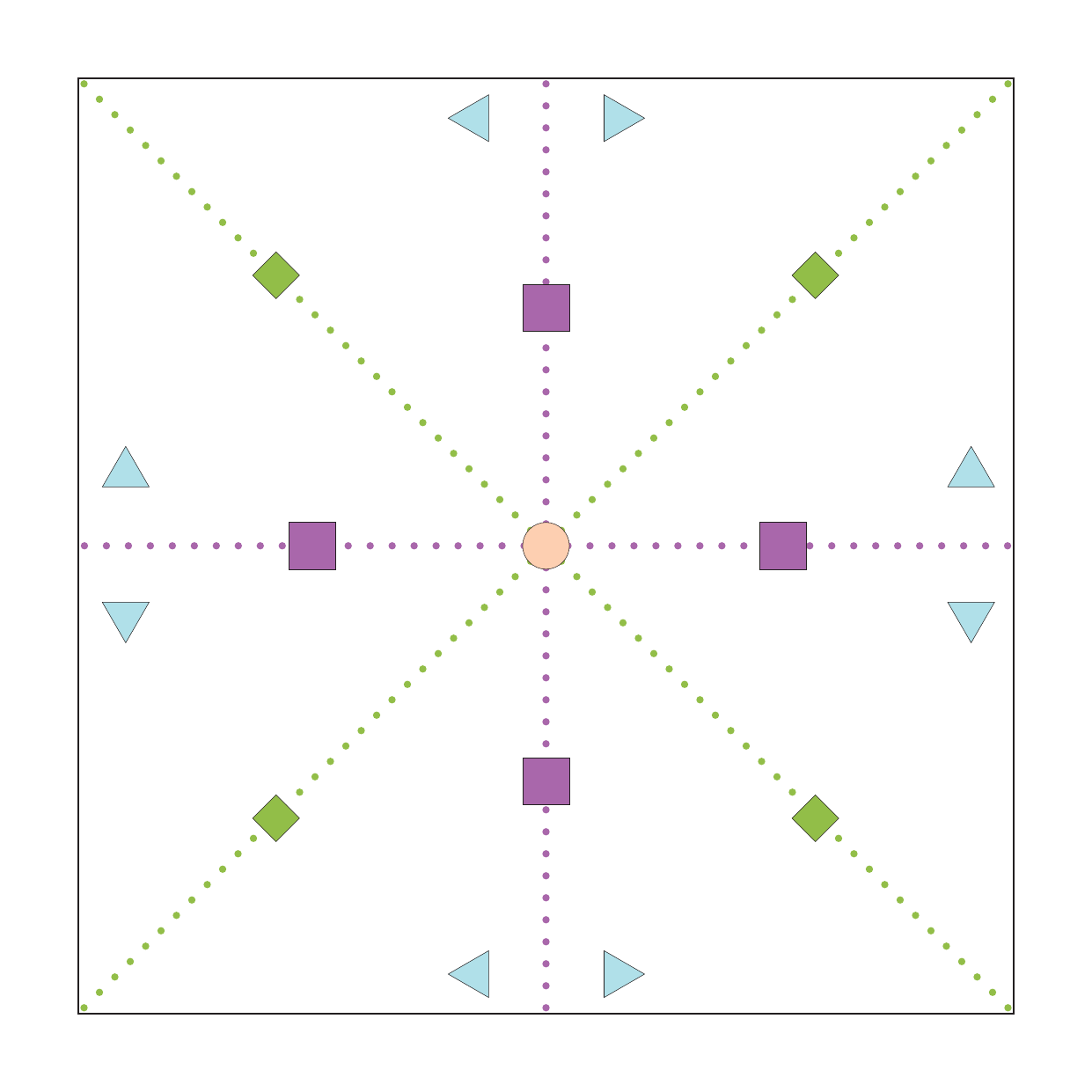}
  \vspace{-15pt}
  \caption{A $2$D example of a unit cell with p4mm symmetry. Applying a group element to this set permutes ``atoms'' of the same type (shape and color). The symmetry divides the unit cell (black box) into four Wyckoff position: the center, horizontal and vertical coordinate axes, diagonal axes, and general position (denoted by white space).}
  \label{fig:schematic}
\end{wrapfigure}

\textbf{Space Groups.}
The space group concept formalizes the intrinsic symmetry of a crystal. 
If $c\in\gC$ is a crystal and $G\leqslant E(3)$ is its symmetry space group, then for any $g\in G$ there exist a permutation $\sigma\in S_n$ that satisfies the relation $g\cdot c=\sigma\cdot c$, a property we will denote as $G$-\textit{symmetry}. 
In essence, any action of the space group is equivalent to a permutation of the atom positions, \cref{fig:schematic} visualizes this property using a $2$D example.
The p4mm symmetry group includes rotations by angles of $\frac{\pi z}{2}$ for $z\in\mathbb{Z}$. The corresponding permutation is invisible, without fabricated labels, because it rearranges the positions of identical shapes. 
Two Crystals $c_1,c_2\in\gC$ are \textit{Mutually $G$-Symmetric} if every space group element $g$ corresponds to the same permutation $\sigma$ on both crystals.
Formally, $c_1$ and $c_2$ are mutually $G$-symmetric if $g\cdot c_1=\sigma\cdot c_1 \iff g \cdot c_2=\sigma\cdot c_2$. There exist $230$ distinct space groups in three-dimensional crystallography.
Owing to the intrinsic periodicity of crystal structures, all corresponding subgroups are finite subgroups of the Euclidean group $E(3)$. 
For non-orthogonal lattice structures, the space group acts on fractional coordinates as elements of the special affine group $SA(3)$, rather than $E(3)$, \cref{fig:hex_example} demonstrates this in $2$D. In simpler terms, we apply the group action after mapping every lattice to the cube using $\boldsymbol{L}^{-1}$.

\begin{wrapfigure}[14]{r}{0.35\textwidth}
  \centering
  \vspace{-12pt}
  \includegraphics[width=0.3\textwidth]{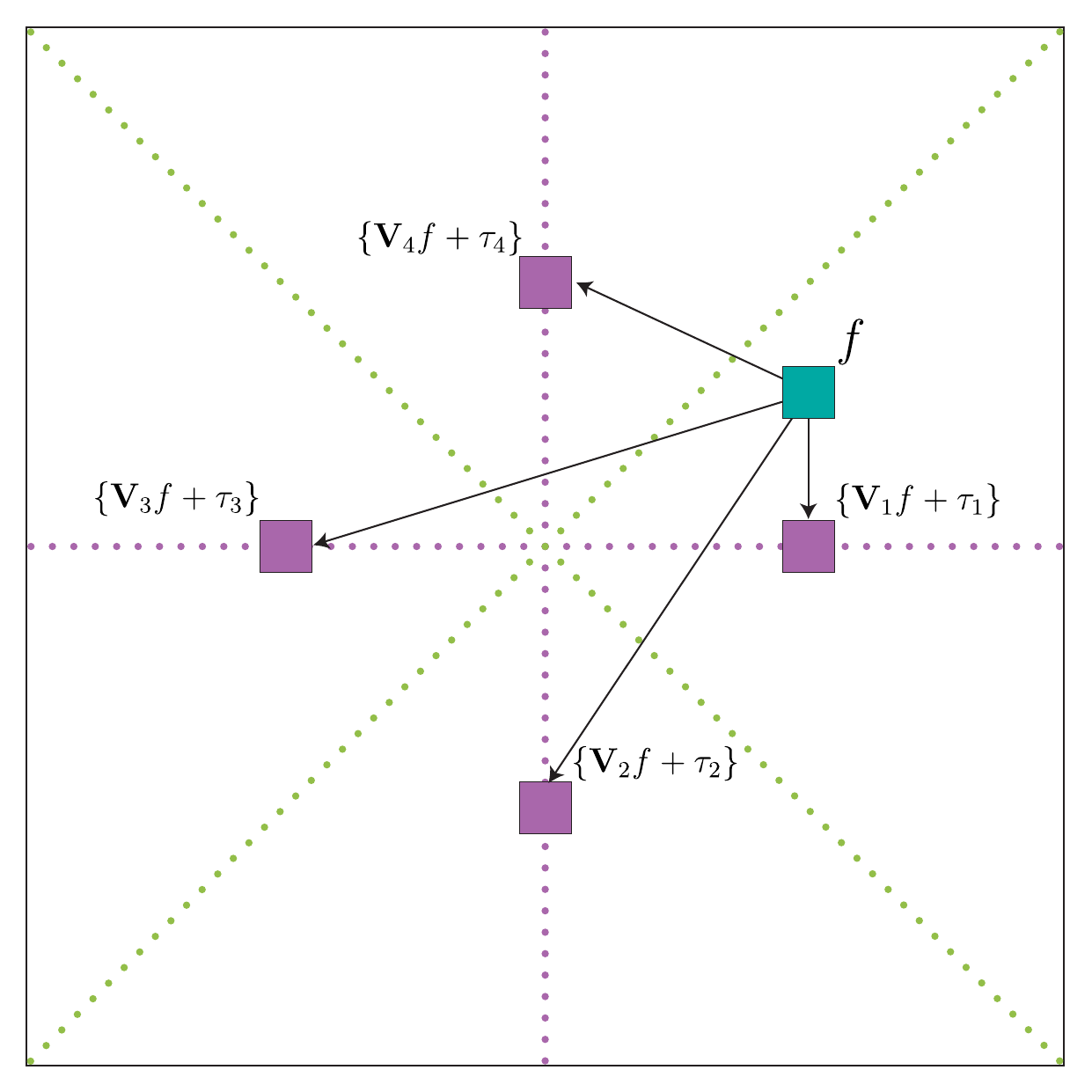}
  \vspace{-10pt}
  \caption{Projection of a coordinate $f$ using Wyckoff position $w$.}
  \label{fig:wyckoff}
\end{wrapfigure}
\textbf{Wyckoff Positions.}
Intuitively, Wyckoff positions of a space group $G$ indicate regions with specific symmetry properties. Atoms in general position occupy the least symmetric position in the crystal, appear most frequently in the unit cell, and enjoy the fewest restrictions on their coordinates. Meanwhile, atoms in one of the several special positions occupy a region of higher symmetry, appear less frequently in the unit cell, and are restricted to lie in low-dimensional affine subspaces.
More formally, a Wyckoff position $w$, is defined by a set of $\abs{w}=m$ affine projections $\{ (\mV_i,\tau_i)\}_{i=1}^m$ onto the corresponding affine subspace, with $\mV_i\in\Real^{3\times 3}$ and $\tau_i\in\Real^3$. We denote projection of $f\in[0,1)^3$ onto each of these $m$ affine subspaces by,
\begin{equation}
    w(f) \coloneqq \left\{ \mV_i f+\tau_i  - \lfloor \mV_i f+\tau_i \rfloor\right\}_{i=1}^{m}.
\end{equation}
If $w$ is a Wyckoff postion of a space group $G$, for every $y\in w(f)$ we have the property 
$w(f)= G\cdot y$ where $G\cdot y \coloneqq \set{g\cdot y \mid g\in G}$ 
denotes the \emph{orbit} of $y$ under $G$. This is visualized in \cref{fig:wyckoff}; it illustrates how $f$ is mapped to an orbit induced by $G$ through $w$.
Each affine transformation $(\mV_i,\tau_i)$ identifies with a subgroup $G'\leqslant G$ that fixes points on its corresponding image. 
Namely, $y_i=\mV_i f+\tau_i  - \lfloor \mV_i f+\tau_i \rfloor$ for some $f\in[0,1)^3$ if and only if $G'=\set{g\in G \mid g\cdot y_i=y_i}$.
That means that $G'=G_{y_i}$, the \textit{site-symmetry} (stabilizer) group of $y_i$. We define the crystal $c$ (with fractional coordinates $\mF$) to be \textit{$\gW$-constructable} with respect to  $\gW=(w_1,\ldots,w_k)$ if there exist $k$ points $\set{x_i\in[0,1)^3}_{i=1}^k$ in the unit cell such that 
$\mF= \bigcup_{i=1}^k w_i(x_i)$
% $\mF=\left(w_i(x_i)\right)_{i=1}^k$ 
up to a permutation. 

\textbf{Flow Matching (FM)} is a generative modeling framework that transform samples from a simple base distribution $p_0$ into a complex target distribution $p_1$ using a time-dependent diffeomorphic
map, called a flow, $\psi:\brac{0,1}\times \gX\to\gX$. This flow is defined through the differential equation:
\begin{equation}\label{eq:FM_ODE}
    \frac{\mathrm{d}}{\mathrm{d}t}\psi_t(x)=v_t(\psi_t(x)),\quad
    \psi_0(x) = x
\end{equation}
\looseness=-1000
where $v_t:\brac{0,1}\times \gX\to\gX$ is a vector field governs the evolution of the $\psi_t$.
The flow induces a time-dependent probability density path $p_t:\brac{0,1}\times\gX\to\Real$ starting at $p_0$ and ending with $p_1$. 
FM trains a parametric approximation $u_t$ of the true vector field $v_t$ \citep{lipman2024flowmatchingguidecode} by solving a regression objective. A conditional flow $\psi(\cdot\,|\,y):\brac{0,1}\times \gX\to\gX$ transports the entire base distribution to a single target point $y\in\gX$ and is governed by vector field $v_t(\cdot\,|\,y)$, which is easy to compute, unlike $v_t(\cdot)$. \citet{lipman2022flow} demonstrated that optimizing $u_t$ to match $v_t(\cdot\,|\, y)$ with regression leads to the same optimum as matching the marginal vector field $v_t$.

\section{Method}
This section presents our proposed model, SGFM, a flow matching-based generative approach designed to sample crystal structures conditioned on specified space groups and Wyckoff positions. We start by formalizing the problem and defining  the target distribution we aim to sample from in \cref{ssec:problem}. Then, \cref{ssec:conditions} outlines sufficient conditions for a flow model to sample from this distribution. \Cref{ssec:SGFM} provides a detailed overview of the model, including its key components, the noise prior and vector field. Finally, \cref{ssec:training} presents the training details of SGFM. 

\subsection{Problem Definition}\label{ssec:problem} 
Given a finite set of crystals $\set{c_1, \ldots, c_m}$, each associated with a space group and Wyckoff positions $(G_i, \gW_i)$ and drawn from an unknown target distribution $q$, our goal is to design a generative model that samples crystals $c \sim p_1$ such that $p_1 \approx q$. To incorporate the structural information encoded by $G$ and $\gW$, we factorize the distribution as $p_1(c) = p_1(c \mid G, \gW)q(G, \gW)$, enabling us to model the crystal distribution conditionally on $G$ and $\gW$. While $G$ and $\gW$ are jointly sampled from empirical distribution, the proposed generative model focuses on sampling crystals from the conditional distribution $c \sim p_1(\cdot \mid G, \gW)$, which satisfies the following properties:
\begin{itemize}
    \item $p(\cdot\,|\, G,\gW)$ is a \textit{$G$-symmetric} distribution, meaning that $p(c\,|\, G,\gW)>0$ if $c$ is $G$-symmetric. The distribution assigns positive probability exclusively to crystals for which $G$ is their corresponding space group.
    \item $p(\cdot\,|\, G,\gW)$ is a \textit{$\gW$-constructable} distribution, meaning that $p(c\,|\, G,\gW)>0$ if $c$ is $\gW$-constructable. 
    $p$ only supports crystals with fractional coordinates constructible by $\gW$.
\end{itemize}
\looseness=-1000
Our analysis is based upon the general conditions for invariant sampling in flow models \citep{kohler2020, rezende2019equivarianthamiltonianflows}.
Next, we will briefly revisit and extend those results to our setting.

\subsection{theoretical analysis}\label{ssec:conditions}
We now present the theoretical concepts underpinning the development of our generative model. The following theorem establishes the conditions under which a flow-based model can sample from a $G$-invariant distribution. This result has been proven in prior work by \citet{kohler2020} and \citet{song2023equivariant}. For completeness, we state the theorem here and provide a concise version of its proof in  \cref{app:kohler_proof}, as it serves as a foundational component for our theoretical analysis.
\begin{theorem}\label{theorem:kohler}
    The probability path $p_t(x)$ defined by a flow generated by a $G$-equivariant vector field $u_t$ from a $G$-invariant prior $p$ is $G$-invariant for all $t\in[0,1]$. 
\end{theorem}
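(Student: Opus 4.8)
The strategy is to reduce this statement about the probability path $p_t$ to an equivariance property of the flow map $\psi_t$, and then transfer invariance through the change-of-variables formula. Concretely, I would proceed in two steps: first show that the flow generated by a $G$-equivariant vector field is itself \emph{$G$-equivariant} as a map, i.e.\ $\psi_t(g\cdot x)=g\cdot\psi_t(x)$ for all $g\in G$ and all $t\in[0,1]$; then combine this with $G$-invariance of the prior $p_0=p$ to conclude that the pushforward density $p_t=(\psi_t)_\#p_0$ is $G$-invariant.

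For the first step, I would fix $g\in G$ and define the curve $\phi_t(x):=g^{-1}\cdot\psi_t(g\cdot x)$, then verify that it solves exactly the same initial value problem \eqref{eq:FM_ODE} as $\psi_t$. The initial condition is immediate, $\phi_0(x)=g^{-1}\cdot\psi_0(g\cdot x)=g^{-1}\cdot(g\cdot x)=x$. For the dynamics, write the affine action as $g\cdot y=D_g\,y+t_g$, so that its differential $D_g$ is a constant linear map and $g\cdot$ acts on tangent vectors through $D_g$ alone; differentiating in $t$ (the translation $t_g$ drops out) gives
\begin{equation}
\frac{\mathrm{d}}{\mathrm{d}t}\phi_t(x)=D_g^{-1}\,u_t(\psi_t(g\cdot x))=D_g^{-1}\,u_t(g\cdot\phi_t(x)),
\end{equation}
and equivariance of the vector field, $u_t(g\cdot y)=D_g\,u_t(y)$, collapses the right-hand side to $u_t(\phi_t(x))$. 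By uniqueness of solutions to \eqref{eq:FM_ODE} (Picard--Lindel\"{o}f, assuming $u_t$ is Lipschitz in $x$ and continuous in $t$) we get $\phi_t\equiv\psi_t$, which is the claimed equivariance; applying the same identity to the inverse yields $\psi_t^{-1}(g\cdot x)=g\cdot\psi_t^{-1}(x)$.

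For the second step, I would write the pushforward density explicitly,
\begin{equation}
p_t(x)=p_0\big(\psi_t^{-1}(x)\big)\,\big|\det D\psi_t^{-1}(x)\big|,
\end{equation}
and evaluate it at $g\cdot x$. The density factor satisfies $p_0(\psi_t^{-1}(g\cdot x))=p_0(g\cdot\psi_t^{-1}(x))=p_0(\psi_t^{-1}(x))$ by equivariance of $\psi_t^{-1}$ and $G$-invariance of the prior. For the Jacobian factor, differentiating the relation $\psi_t^{-1}(g\cdot x)=g\cdot\psi_t^{-1}(x)$ gives the conjugation identity $D\psi_t^{-1}(g\cdot x)=D_g\,D\psi_t^{-1}(x)\,D_g^{-1}$, whose determinant equals $\det D\psi_t^{-1}(x)$. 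Combining these two facts gives $p_t(g\cdot x)=p_t(x)$, i.e.\ $G$-invariance for every $t$. As an alternative route I would mention proving directly that $p_t^g(x):=p_t(g\cdot x)$ solves the same continuity equation $\partial_t p_t+\divv(p_t u_t)=0$ with the same $G$-invariant initial data, and then invoking uniqueness for that transport PDE.

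I expect the main obstacle to be the bookkeeping around the \emph{affine} (rather than purely linear) nature of the action together with the toroidal domain of fractional coordinates. Because the space-group action includes a translation and a modulo-one (floor) reduction, one must carefully distinguish how $g\cdot$ acts on points (affinely) versus on tangent vectors and inside the Jacobian (only through the linear part $D_g$), and verify that the action is a volume-preserving diffeomorphism of the torus so that the change-of-variables formula and the determinant cancellation remain valid across cell boundaries. The regularity of $u_t$ needed to invoke uniqueness of the ODE and to differentiate the flow is an assumption I would state explicitly rather than belabor.
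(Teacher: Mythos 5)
Your proposal is correct and follows essentially the same two-step route as the paper: first establishing $G$-equivariance of the flow $\psi_t$ via a uniqueness-of-ODE-solutions argument, then pushing the prior's invariance through the change-of-variables formula, with the Jacobian determinant surviving because the linear part of the group action enters only by conjugation. Your version is in fact slightly more careful than the paper's (explicitly separating the affine action on points from the linear action on tangent vectors, and stating the regularity needed for Picard--Lindel\"{o}f), but the underlying argument is the same.
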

The proof of this theorem consists of two parts. First, we demonstrate that the flow $\psi_t(x)$, is $G$-equivariant. Second, we show $p_t(x)$ is $G$-invariant. Building on this proof, we derive the conditions under which a flow-based model can sample from a distribution that is both $G$-symmetric and $\gW$-constructable.
Achieving this requires extending the standard framework with two modifications:
% As we will demonstrate, achieving this requires extending the standard framework with two key modifications:
\begin{enumerate}
    \item Introducing a noise prior that is itself $G$-symmetric and $\gW$-constructable.
    \item Ensuring that the vector field model $u_t$ is equivariant with respect to the space group $G$ and the permutation group $S_n$. To do so we extend the vector field (and corresponding flow) equivariance to the group product $G\times S_n$.
\end{enumerate}

\begin{theorem}\label{theorem:main_theorem}
    The probability path $p_t(x)$ defined by a flow generated by a $G\times S_n$ equivariant vector field $u_t$ from a $G$-symmetric and $\gW$-constructable prior $p$ is $G$-symmetric and $\gW$-constructable for all $t\in[0,1]$. 
\end{theorem}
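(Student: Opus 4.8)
The plan is to build directly on \cref{theorem:kohler} and decompose the claim into its two separate assertions: that $p_t$ remains $G$-symmetric, and that $p_t$ remains $\gW$-constructable, for every $t\in[0,1]$. The key structural observation is that both properties are defined in terms of the \emph{support} of the distribution rather than its density values, so the strategy is to track how the equivariant flow $\psi_t$ moves the support of the prior $p$ and to verify that the defining property is preserved pointwise along every trajectory. Since \cref{theorem:kohler} already gives us that a flow from an equivariant vector field and invariant prior produces an invariant path, the first step is to instantiate that machinery for the product group $G\times S_n$ and extract the equivariance of the flow map itself, namely $\psi_t((g,\sigma)\cdot x)=(g,\sigma)\cdot\psi_t(x)$, which follows from uniqueness of ODE solutions together with the $G\times S_n$-equivariance of $u_t$.

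First I would handle $G$-symmetry. Recall a crystal $c$ is $G$-symmetric when for every $g\in G$ there is a permutation $\sigma\in S_n$ with $g\cdot c=\sigma\cdot c$. The plan is to take an arbitrary sample $c$ from the prior (so $c$ is $G$-symmetric by hypothesis), fix $g\in G$ with its associated $\sigma$, and use the commutativity $g\cdot(\sigma\cdot c)=\sigma\cdot(g\cdot c)$ established by \citet{puny2021frame} together with the flow equivariance to show $g\cdot\psi_t(c)=\sigma\cdot\psi_t(c)$. Concretely, $g\cdot\psi_t(c)=\psi_t(g\cdot c)=\psi_t(\sigma\cdot c)=\sigma\cdot\psi_t(c)$, where the outer equalities are the $G$- and $S_n$-equivariance of $\psi_t$ respectively and the middle equality uses $g\cdot c=\sigma\cdot c$. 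Hence $\psi_t(c)$ is $G$-symmetric, and since $p_t$ is the pushforward of $p$ under $\psi_t$, its support consists of $G$-symmetric crystals; this gives $p_t(c\mid G,\gW)>0\Rightarrow c$ is $G$-symmetric.

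Next I would handle $\gW$-constructability by the same support-pushing argument. A crystal is $\gW$-constructable when its fractional coordinates decompose as $\mF=\bigcup_{i=1}^k w_i(x_i)$ up to permutation. The step to verify is that the $G\times S_n$-equivariant flow maps a $\gW$-constructable crystal to another $\gW$-constructable crystal. The natural route is to observe that each Wyckoff orbit $w_i(x_i)=G\cdot y$ is a union of points forming a single $G$-orbit, and that applying $\psi_t$ commutes with the group action, so the image coordinates again organize into $G$-orbits of the same Wyckoff types; the generating points $x_i$ simply flow to new generating points, and the permutation freedom in the definition is absorbed by the $S_n$-equivariance. I would make this precise by showing that if $y\in w_i(x_i)$ then the flowed point lies in the orbit $w_i$ of the flowed generator.

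The main obstacle I anticipate is the $\gW$-constructability step rather than the $G$-symmetry step. The subtlety is that the flow acts on the whole crystal $c=(k,\mF,\mA)$ and moves coordinates jointly, so I must argue carefully that the flowed coordinates still lie in the \emph{same} affine subspaces prescribed by the fixed Wyckoff list $\gW$, i.e. that the site-symmetry (stabilizer) structure is preserved. This reduces to showing that flow equivariance implies the stabilizer of each flowed point equals the stabilizer of the original: if $G_y=\set{g\in G\mid g\cdot y=y\}$ and $y'=\psi_t(\text{generator})$'s orbit member, then $g\cdot y=y$ forces $g\cdot\psi_t(\cdot)=\psi_t(\cdot)$ on the corresponding component, so $G_{y'}=G_y$, keeping each atom pinned to its originally assigned Wyckoff position throughout the trajectory. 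Establishing this stabilizer-preservation cleanly — and reconciling it with the per-row action of $S_n$ on $\mF$ — is where the real work lies; once it is in place, the conclusion for $p_t$ follows immediately from the pushforward characterization of the support.
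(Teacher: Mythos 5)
Your proposal is correct and follows essentially the same route as the paper: the $G$-symmetry part is the identical chain $g\cdot\psi_t(c)=\psi_t(g\cdot c)=\psi_t(\sigma\cdot c)=\sigma\cdot\psi_t(c)$, and the $\gW$-constructability part is the paper's Lemma~\ref{lemma:symmetric_flow} argument that mutual $G$-symmetry of $c$ and $\psi_t(c)$ forces each atom's site-symmetry (stabilizer) group to be preserved, via the permutation $\sigma'$ fixing the corresponding index. The only detail to make fully explicit is the step you flagged yourself: passing from $g'\cdot f_i=f_i$ to $\sigma'(i)=i$ and back, which is exactly how the paper pins each coordinate to its Wyckoff position.
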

This theorem follows directly from the lemma below (proof in  \cref{app:symmetric_flow}) , which establishes that if the initial point $c_0$ of a $G\times S_n$ equivariant flow is a $G$-symmetric crystal structure, any point along the flow will be mutually $G$-symmetric with $c_0$. Furthermore, we demonstrate that the flow preserves the site-symmetry structure of $c_0$. This implies that if $c_0$ is $\gW$-constructable, then $\psi_t(c_0)$ is also $\gW$-constructable. \Cref{fig:main_fig} (b) visualizes the core idea behind the theorem, illustrating how the equivariant vector field constrains  atoms to move solely within the image of their Wyckoff position.
\begin{lemma}\label{lemma:symmetric_flow}
    Let $\psi_t$ be a $G \times S_n$ equivariant flow and $c\in \gC$ be $G$-symmetric and $\gW$-constructable, then $\psi_t(c)$ is $G$-symmetric and $\gW$-constructable. 
\end{lemma}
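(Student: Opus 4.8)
The plan is to prove the two asserted properties separately: first $G$-symmetry, which is a short algebraic consequence of equivariance, and then $\gW$-constructability, which requires tracking the site-symmetry (stabilizer) group of each atom along the flow.

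For $G$-symmetry I would start from the defining relation of the $G$-symmetric crystal $c$: for every $g\in G$ there is a permutation $\sigma_g\in S_n$ with $g\cdot c=\sigma_g\cdot c$. Applying $\psi_t$ to both sides and invoking $G\times S_n$-equivariance of $\psi_t$ separately on the elements $(g,\mathrm{id})$ and $(\mathrm{id},\sigma_g)$ gives $g\cdot\psi_t(c)=\sigma_g\cdot\psi_t(c)$. Thus the \emph{same} assignment $g\mapsto\sigma_g$ witnesses $G$-symmetry of $\psi_t(c)$; in particular $c$ and $\psi_t(c)$ are mutually $G$-symmetric and share a common partition of the atoms into $G$-orbits. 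Here I rely on the commutativity of the $G$- and $S_n$-actions noted in the preliminaries (via \citet{puny2021frame}), which lets the product action split as needed.

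For $\gW$-constructability the key object is the stabilizer of each coordinate. Write $f_j$ for the $j$-th fractional coordinate of $c$ and $y_j$ for that of $\psi_t(c)$. I would first show stabilizers cannot shrink along the flow: if $g\cdot f_j=f_j$, then, since distinct atoms of the $\gW$-constructable crystal $c$ occupy distinct sites, the relation $g\cdot c=\sigma_g\cdot c$ forces $\sigma_g(j)=j$; feeding this same permutation into the mutual $G$-symmetry relation $g\cdot\psi_t(c)=\sigma_g\cdot\psi_t(c)$ yields $g\cdot y_j=y_j$. Hence $G_{f_j}\subseteq G_{y_j}$, so $y_j$ lies in the fixed-point set of $G_{f_j}$, i.e. in the affine subspace onto which the Wyckoff projection of atom $j$ maps. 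It then remains to reassemble: by the shared orbit partition, each set $w_i(x_i)$ of $c$ is a single $G$-orbit of atoms that flows to the orbit $G\cdot\tilde y_i$ of $\psi_t(c)$, where $\tilde y_i$ is the image of a chosen representative. Because $\tilde y_i$ sits in the affine subspace of $w_i$ and the defining projection is idempotent on that subspace, it fixes $\tilde y_i$, so $\tilde y_i\in w_i(\tilde y_i)$ and therefore $w_i(\tilde y_i)=G\cdot\tilde y_i$ by the orbit property $w(f)=G\cdot y$. Setting $x_i'=\tilde y_i$ exhibits the coordinates of $\psi_t(c)$ as $\bigcup_{i=1}^k w_i(x_i')$, which is exactly $\gW$-constructability.

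I expect the main obstacle to be the stabilizer step: a priori a stabilizer could \emph{grow} rather than stay fixed (if two atoms were to collide, landing on a higher-symmetry site), and one might worry this ejects the atom from its Wyckoff position. The clean way around this is to notice that the reconstruction only ever needs the inclusion $G_{f_j}\subseteq G_{y_j}$, together with the fact that the Wyckoff projection fixes its \emph{entire} image subspace (including any higher-symmetry points); this makes $w_i(\tilde y_i)=G\cdot\tilde y_i$ go through without having to rule out collisions. A secondary point to handle carefully is the index bookkeeping that relates $g\cdot c=\sigma_g\cdot c$ to the action on individual rows of $\mF$, which is where the distinctness of the sites of $c$ enters.
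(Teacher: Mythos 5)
Your proposal is correct and follows essentially the same route as the paper: the $G$-symmetry part is the identical chain $g\cdot\psi_t(c)=\psi_t(g\cdot c)=\psi_t(\sigma\cdot c)=\sigma\cdot\psi_t(c)$, and the $\gW$-constructability part rests on the same observation that $g\in G_{f_j}$ forces $\sigma_g(j)=j$ and hence $G_{f_j}\subseteq G_{y_j}$. The only difference is that you spell out the final reassembly into $\bigcup_i w_i(x_i')$ and the role of the stabilizer inclusion more explicitly than the paper, which simply asserts that the preserved site-symmetry structure yields $\gW$-constructability.
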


\subsection{SGFM}\label{ssec:SGFM}

In this section, we introduce the key components of SGFM, with  emphasis on the prior model and the learned vector field architecture. We explain how the previously outlined conditions are concretely implemented. The main focus lies in the interaction between the method and the crystal’s fractional coordinates, due to their strong dependence on the space group and Wyckoff positions.

\textbf{Noise Prior.} According to  \cref{theorem:main_theorem}, for the generated distribution to satisfy the conditions outlined in \cref{ssec:problem}, the noise prior must also satisfy the same constraints. \Cref{alg:sample_alg} presents a noise prior sampling pseudocode (with $2$D visualizations in \cref{fig:main_fig} (a)) that generates initial fractional coordinates compliant with these requirements. The sampling procedure for the lattice parameters and atom types is described in  \cref{ssec:training} for the reader’s convenience. The algorithm iterates over the set of Wyckoff positions and samples orbits induced by $G$ by projecting random points from the unit cell. It follows directly from the algorithm’s construction that $\mF_0$ is $\gW$-constructable and the following lemma further establishes that $\mF_0$ is also $G$-symmetric. The proof (\cref{app:W_to_G}) relies on the fact that the action of a group element on an orbit defines a bijection.
\begin{lemma}\label{lemma:W_to_G}
    Let $G\leqslant E(3)$ be a space group, $\gW$ a corresponding set of Wyckoff positions and $\mF_0\in[0,1)^{n\times 3}$ a $\gW$-constructable set of points in the unit cell, then $\mF_0$ is $G$-symmetric. 
\end{lemma}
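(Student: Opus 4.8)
The plan is to unwind the definition of $\gW$-constructability into a statement about $G$-orbits, and then verify the $G$-symmetry condition one orbit at a time. By hypothesis there exist points $\{x_i\}_{i=1}^k$ in the unit cell such that, up to a permutation of rows, $\mF_0 = \bigcup_{i=1}^k w_i(x_i)$. The structural fact I would lean on is the one already recorded in the Wyckoff-position preliminaries: each block $w_i(x_i)$ is an entire $G$-orbit, since for any chosen $y_i \in w_i(x_i)$ we have $w_i(x_i) = G \cdot y_i$. Thus the set of fractional coordinates appearing as rows of $\mF_0$ is exactly a union of $G$-orbits.

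Next I would fix an arbitrary $g \in G$ and show its action permutes the rows of $\mF_0$. Because left multiplication by $g$ is a bijection of $G$ onto itself, we get $g \cdot (G \cdot y_i) = (gG)\cdot y_i = G \cdot y_i$ for each $i$; that is, $g$ maps each block $w_i(x_i)$ onto itself. Here the floor/wrapping in the definition of the action on $\mF$ (for $g=(\mR,\tau)$) is not an obstacle, because the orbits $G\cdot y_i$ are defined with respect to that same wrapped action on the unit cell, so set-wise invariance of each block under $g$ is immediate. Since $g$ acts as an isometry it is injective, hence its restriction to the finite set $w_i(x_i)$ is a bijection of that set onto itself, i.e. a permutation of the $\abs{w_i}$ points comprising the block — this is the ``action on an orbit is a bijection'' step.

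Assembling these block-wise bijections yields a single permutation $\sigma \in S_n$ of all $n = \sum_i \abs{w_i}$ rows satisfying $g \cdot \mF_0 = \sigma \cdot \mF_0$, which is precisely the definition of $G$-symmetry. As $g \in G$ was arbitrary, this establishes that $\mF_0$ is $G$-symmetric and completes the argument.

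The main point requiring care is the bookkeeping that turns the set-level bijections into a genuine element of $S_n$ acting on the $n$ indexed rows, especially when orbit blocks overlap or a block shares coordinates with another. I would handle this by treating $\mF_0$ as an ordered list rather than a bare set: since each $w_i$ produces exactly $\abs{w_i}$ distinct points and $g$ both preserves every orbit set-wise and is injective, the row count of each block is preserved, so the block-wise bijections patch together into a well-defined permutation of row indices. This reconciliation of the orbit picture with the indexed-row representation of $\mF$ is the only genuinely delicate part; the group-theoretic core ($gG = G$) is routine.
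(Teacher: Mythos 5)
Your proof is correct and follows essentially the same route as the paper's: decompose $\mF_0$ into the $G$-orbits $w_i(x_i)$, observe that $g$ maps each orbit bijectively onto itself, and assemble the resulting block-wise permutations into a single $\sigma\in S_n$. The extra justification you supply (that $gG=G$ gives set-wise invariance, and the bookkeeping turning set bijections into a row permutation) only elaborates on the paper's one-line appeal to ``the action of a group element on an orbit is a bijection.''
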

\begin{algorithm}[tb]
   \caption{Sample $\mF_0\sim p_0(\cdot| G,\gW)$}
   \label{alg:smple}
\begin{algorithmic}[1]
   \STATE {\bfseries Input:} $\gW=\set{w_1,\ldots,w_k}$ s.t.~$w_i$ is a Wyckoff position of the space group $G$. 
  \STATE {\bfseries Output:}  $\mF_0\in[0,1)^{n\times 3}$ s.t.~$n=\sum_{i=1}^k\abs{w_i}$.
  \STATE set $\mF_0=[ \, ]$
  % \STATE set $\mA_0=\brac{}$
  \FOR{$i = 1$ to $k$}
    \STATE Sample $x\sim \gU\brac{0,1}^3$
    \STATE $\mF_0 =$ Concatenate$(\brac{\mF_0,w_i(x)})$
  \ENDFOR
  % \STATE set $k_0\sim\gN(0,\mI)$
  % \STATE $k_0 = k_0 \odot mask($G$)$
  % \STATE $\mA_0=\set{0}^{n\times h}$
  % \STATE $c_0=(k_0,\mF_0,\mA_0)$
  % \STATE $c_0=(k_0,\mF_0,\mA_0)$
  
  % \STATE set $\sigma \sim S_n$
  \STATE return $\mF_0$  
\end{algorithmic}\label{alg:sample_alg}
\end{algorithm}

\textbf{Space Group Conditional Vector Field.} As noted previously, $u_t$ must be $G\times S_n$ equivariant in order for the flow to be $G$-symmetric and $\gW$-constructable. The challenge lies in using a single $u_t$ model across crystals with varying space groups. Since some space groups (with non-orthogonal lattice structure) act on the fractional coordinates with special affine structured transformations, (see  \cref{fig:hex_example}),  using an $E(3)\times S_n$-equivariant model is inadequate. To address these limitations, we adopt Group Averaging (GA) \citep{yarotsky2022universal}, a symmetrization operator that projects a backbone model onto the space of $G$-equivariant functions. It is defined as: 
\begin{equation}\label{eq:GA}
    \hat{u}(c\, |\, G)= \sum_{g\in G}g\cdot u(g^{-1}\cdot c).
\end{equation}
Applying GA to enforce space group symmetry addresses the previously mentioned limitations. Specifically, if the backbone $u$ is $S_n$ equivariant , then $\hat{u}(\cdot\,|\, G)$  is $G\times S_n$ equivariant \citep{puny2021frame}. Additionally, GA is not limited to subgroups of $E(3)$ and support all finite groups. Importantly, \citet{puny2021frame} showed that symmetrization preserves the expressive power of the original model. However, a drawback of GA is its computational burden: directly applying \cref{eq:GA} increases the number of evaluations of $u$ by a factor of $\abs{G}$, which can be as large as $192$ for $3$D space groups (the average space group size in the MP-$20$ dataset is $\sim 45$). To mitigate this computationally intensive formulation, we leverage the fact that the inputs to $\hat{u}(\cdot\,|\, G)$ are $G$-symmetric crystals, allowing us to derive an efficient and equivalent formulation of GA specific to this case. 

\begin{lemma}\label{lemma:efficient_GA}
    Let $c\in C$ be a crystal, $G$ its space group, and $u$ an $S_n$ equivariant vector field. Then, \cref{eq:GA} can be equivalently rewritten as follows:
    \begin{equation}\label{eq:efficient_GA}
    \hat{u}(c\,|\, G)= \sum_{g\in G}g\cdot \sigma_{g^{-1}| c}\cdot u(c)
\end{equation}
Where $\sigma_{g^{-1}| c}\in S_n$ satisfy the equation $\sigma_{g^{-1}| c} \cdot c = g^{-1}\cdot c$.
\end{lemma}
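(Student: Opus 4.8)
The plan is to prove the identity by a direct substitution that exploits two facts already established: the $G$-\emph{symmetry} of the input crystal $c$ and the $S_n$-equivariance of the backbone $u$. The entire motivation is that the expensive object in \cref{eq:GA} is the evaluation $u(g^{-1}\cdot c)$, which a priori must be recomputed for each of the $\abs{G}$ group elements. I want to show that every such evaluation is merely a \emph{permutation} of the single forward pass $u(c)$, so that only one evaluation of $u$ is ever required and the remaining work reduces to cheap group and permutation actions.

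First I would fix $g\in G$ and note that $g^{-1}\in G$ as well, since $G$ is a group. Because $G$ is the space group of $c$, the $G$-symmetry property supplies a permutation $\sigma_{g^{-1}|c}\in S_n$ with $\sigma_{g^{-1}|c}\cdot c = g^{-1}\cdot c$; this is exactly the permutation named in the statement. Substituting this equality into the argument of $u$ rewrites the summand $g\cdot u(g^{-1}\cdot c)$ as $g\cdot u(\sigma_{g^{-1}|c}\cdot c)$. I would then invoke the $S_n$-equivariance of $u$, i.e.\ $u(\sigma\cdot c)=\sigma\cdot u(c)$ for every $\sigma\in S_n$, to pull the permutation out and obtain $g\cdot \sigma_{g^{-1}|c}\cdot u(c)$. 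Summing over $g\in G$ then yields precisely \cref{eq:efficient_GA}, with $u$ evaluated only once at $c$.

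The single point that deserves care — and which I expect to be the main, though mild, obstacle — is that the permutation $\sigma_{g^{-1}|c}$ need not be unique: when a crystal contains coincident atoms of the same type, several permutations $\sigma$ can satisfy $\sigma\cdot c = g^{-1}\cdot c$, so I must verify that the right-hand side of \cref{eq:efficient_GA} is independent of this choice. This follows once more from equivariance: if $\sigma_1\cdot c=\sigma_2\cdot c$, then $\sigma_2^{-1}\sigma_1$ stabilizes $c$, and equivariance gives $u(c)=u(\sigma_2^{-1}\sigma_1\cdot c)=\sigma_2^{-1}\sigma_1\cdot u(c)$, whence $\sigma_1\cdot u(c)=\sigma_2\cdot u(c)$. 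Thus each summand is well defined regardless of the representative permutation selected, and the two formulations agree term by term, completing the argument.
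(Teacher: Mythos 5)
Your proof is correct and follows exactly the paper's argument: substitute $g^{-1}\cdot c = \sigma_{g^{-1}|c}\cdot c$ using the $G$-symmetry of $c$, then pull the permutation through $u$ by $S_n$-equivariance. Your additional check that the result is independent of the choice of $\sigma_{g^{-1}|c}$ is a welcome refinement the paper omits, but the core route is identical.
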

\begin{wraptable}[7]{r}{0.35\textwidth}
\vspace{-20pt}
\centering
\caption{Training and generation time comparison of different vector field models.}\label{tab:time_copmarison}
\vspace{5pt}
\renewcommand{\tabcolsep}{1.6pt}
\begin{adjustbox}{width=0.35\textwidth}
\begin{tabular}{l|cc|c}
% \toprule
\multirow{2}{*}{Model} & \multicolumn{2}{c|}{Training} & Generation \\
                       & Batch size & Time (s) & Time (s)   \\ \hline
SGFM                   & $64$               & $28.2$   & $17.81$    \\
Non-Equivariant   & $64$               & $26.3$   & $16.39$    \\
GA              & $1$                & $600$    & -         \\
% \bottomrule

\end{tabular}
\end{adjustbox}
\end{wraptable}
The formulation presented in \cref{eq:efficient_GA} requires only a single evaluation of $u$, which dramatically improves the model efficiency. \Cref{fig:main_fig} (c) compares between \cref{eq:GA} and \cref{eq:efficient_GA} and visualize the efficiency gain. Furthermore, computing $\sigma_{g^{-1}| c}$ is computationally efficient, since we can decompose the problem according to the orbits of $c$, determined by $\gW$ (\cref{app:W_to_G}). At inference time, these permutations only need to be computed once for $c_0$, since  \cref{theorem:main_theorem} guarantees that the flow preserves $G$-symmetry structure. During training, permutations are computed only once during preprocessing for every data point. 
\Cref{tab:time_copmarison} compares the training and generation runtimes between SGFM, a non-equivariant variant (no symmetrization), and  standard GA  highlighting the efficiency gains of our GA implementation compared to the standard GA, and demonstrating that its computational cost is comparable to using a backbone without symmetrization. Further details of this comparison are in \cref{app:runtime}.

\subsection{Training SGFM}\label{ssec:training} This section provides an overview of the SGFM training process. Let $c_1\in\gC$ be a crystal from the training set with a corresponding space group $G$ and Wyckoff positions $\gW$ . We will denote $c_0\sim p_0(\cdot| G,\gW,c_1)$ a sample from the conditional noise prior, $c_t=\psi_t(c_0\,|\,c_1)$ the conditional flow  where $c_t=(k_t,\mF_t,\mA_t)$, $v_t(c_t|c_1) = (v^k_t(c_t|c_1), v^{\mF}_t(c_t|c_1), v^{\mA}_t(c_t|c_1))$ is the conditional vector field and $\hat{u}_t(c_t| G)=(\hat{u}^k_t(c_t| G),\hat{u}^\mF_t(c_t| G),\hat{u}^\mA_t(c_t| G))$ is the prediction of the $G\times S_n$ equivariant vector field parametric model.

\textbf{Lattice Parameters.} As noted in \cref{sec:Preliminaries}, we represent lattice parameters using the group-conditioned form from \citep{jiao2024space}, where $k \in \Real^6$ encodes the basis coefficients of a $3$D symmetric matrix constrained to $G$-specific subspaces. To sample $k_0\in \Real^6$, we first draw coefficients $k'\sim \gN(0,\mI)$ and apply a group condition mask: $k_0=k'\odot m(G)$, where $m(G)\in\set{0,1}^6$ is a group-dependent binary mask that zeros out the irrelevant coefficients. $k_t$ is computed as a linear interpolation of $k_0$ and $k_1$, $k_t=(1-t)k_0+tk_1$ and the corresponding component of the conditional vector field is $v^k_t(c_t|c_1)=k_1-k_0$. Since the group action does not directly act on the lattice parameterization we need to apply the $m(G)$ on $\hat{u}^k_t(c_t| G)$, both in training and after each generation step. The lattice optimization objective is:
\begin{equation}
    \gL^k(\theta)=\E_{t,q(c_1),p_0(c_0| G)}\norm{\hat{u}^k_t(c_t| G)\odot m(G)-(k_1-k_0)}^2_2
\end{equation}

\textbf{Atom Types.} For the DNG task, which involves predicting atom types, we follow the modeling approach introduced in \citet{miller2024flowmm}, where atom types are represented using a $\set{-1,1}$ binary format instead of standard one-hot encoding. Specifically, $A_1 \in \set{0,1}^{n \times h}$ is converted into its binary representation $\Tilde{A}_1 \in \set{-1,1}^{n \times \ceil{\log_2 h}}$. To ensure $G$-symmetry in the initial sample $c_0$, atom types must be consistent within each orbit. Accordingly, we sample initial Gaussian noise $\gN(0,1)^{\ceil{\log_2 h}}$ per orbit and broadcast it to all atoms within that orbit to sample $\Tilde{A}_0\in \set{-1,1}^{n \times \ceil{\log_2 h}}$. We define $\Tilde{A_t}=(1-t)\Tilde{A_0}+ t\Tilde{A_0}$ and $v^A_t(c_t|c_1)=\Tilde{A_1} - \Tilde{A_0}$. The atom types optimization objective is: 
\begin{equation}
    \gL^A(\theta)=\E_{t,q(c_1),p_0(c_0| G)}\norm{\hat{u}^A_t(c_t| G)- (\Tilde{A_1} - \Tilde{A_0})}^2_2
\end{equation}
Our GA formulation (\cref{eq:efficient_GA}) ensures that $\hat{u}^A_t(c_t \mid G)$ is $G$-invariant, meaning the atom type vector field is consistent across orbits, as required. During inference, we apply the sign function to convert the continuous atom type predictions into their binary representation.

\textbf{Fractional Coordinates.} \Cref{alg:sample_alg} describes a general procedure for sampling fractional coordinates that are both $G$-symmetric and $\gW$-constructable. To ensure $G$-symmetry of the conditional flow, the initial coordinates $\mF_0 \sim p_0(\cdot,|,G,\gW)$ must be $G$-symmetric with $\mF_1$. This requires that the order of elements and operators in $\gW$ match that used to generate $\mF_1$, which we precompute during preprocessing using the PyXtal library \citep{pyxtal}.
 We adopt the flat torus geometry of the unit cell, following the approach proposed by \citet{miller2024flowmm}, and define the conditional flow over the fractional coordinates - 
\begin{equation}
\psi_t(\mF_0|\mF_1)=\mF_0+t\cdot \log_{\mF_0}(\mF_1)
\end{equation}
Where $\log_{(\cdot)}(\cdot)$ (\cref{eq:logmap}) is the element-wise logarithmic map 
over the flat tori.  In \cref{app:fractional_coordinates_flow} we demonstrate: (1) the conditional vector field $\log_{\mF_0}(\mF_1)$ is $G$-equivariant but with respect to a different representation of $G$. Let $g\in G$ then $\log_{g\cdot \mF_0}(g\cdot \mF_1)=g^{\star}\log_{\mF_0}(\mF_1)$ where $g^\star$ is defined by the homomorphism $(\mR,\tau)\mapsto \mR$; (2) $\psi_t(\mF_0|\mF_1)$ is mutually $G$-symmetric with $\mF_0$ and $\mF_1$, hence $G$-symmetric and $\gW$-constructable. The fractional coordinates optimization objective is:
\begin{equation}
    \gL^F(\theta)=\E_{t,q(c_1),p_0(c_0|G,\gW)}\norm{\hat{u}^\mF_t(c_t| G) - \log_{\mF_0}(\mF_1)}^2_2
\end{equation}

Combining all the components we obtain SGFM training objective:
\begin{equation}
    \gL^{\text{SGFM}}(\theta)=\lambda_k\gL^k(\theta)+\lambda_{\mF}\gL^{\mF}(\theta)+\lambda_{\mA}\gL^{\mA}(\theta)
\end{equation}
where $\lambda_k,\lambda_{\mF},\lambda_{\mA}\in\Real^+$ are hyperparameters.

% \section{Related Work}
\begin{table}[htb]
    \vspace{-15pt}
    \centering
    \small
    \renewcommand{\tabcolsep}{1.6pt}
    \caption{CSP Results over a collection of datasets. MR denotes match rate. Best results are bolded}
    \vspace{10pt}
    \resizebox{\textwidth}{!}{%

    \begin{tabular}{l|cc|cc|cc|cc|cc}
    \toprule
    \multirow{2}{*}{Model}&\multicolumn{2}{c|}{MP-$20$} & \multicolumn{2}{c|}{MPTS-$52$}& \multicolumn{2}{c|}{Perov-$5$} & \multicolumn{2}{c|}{Carbon-$24$}& \multicolumn{2}{c}{Alex-MP-$20$}\\
   & \multicolumn{1}{c}{MR (\%) $\uparrow$} & \multicolumn{1}{c|}{RMSE $\downarrow$} & \multicolumn{1}{c}{MR (\%) $\uparrow$} & \multicolumn{1}{c|}{RMSE $\downarrow$} & \multicolumn{1}{c}{MR (\%) $\uparrow$} & \multicolumn{1}{c|}{RMSE $\downarrow$} & \multicolumn{1}{c}{MR (\%) $\uparrow$} & \multicolumn{1}{c|}{RMSE $\downarrow$} & \multicolumn{1}{c}{MR (\%) $\uparrow$} & \multicolumn{1}{c}{RMSE $\downarrow$} \\ \hline
   CDVAE&$33.90$&$.1045$&$5.34$&$.2106$&$45.31$&$.1138$&$17.09$&$.2969$&-&-\\
   FlowMM&$61.39$&$.0566$&$17.54$&$.1726$&$53.15$&$.0992$&$23.47$&$.4122$&-&-                                       \\ 
   OMatG& $69.83$&$.0741$&$27.38$&$.1970$&$83.06$&$.3753$&-&-&$72.50$&$.1260$\\\hline
   DiffCSP++ &$80.27$&$.0295$&$46.29$&$.0896$&$98.44$&$.0430$&-&-&-&-\\\hline
   GCFM (ours)&$\textbf{82.74}$&$\textbf{.0288}$&$\textbf{51.79}$&$\textbf{.0827}$&$\textbf{98.57}$&$\textbf{.0188}$&$\textbf{55.02}$&$\textbf{.0952}$&$\textbf{81.98}$&$\textbf{.0243}$\\
   \bottomrule
    \end{tabular}}
    \label{tab:csp_results}
    \vspace{-10pt}
\end{table}

\section{Experiments}\label{sec:exp}
The experiments can be divided into two sections: \textit{Crystal Structure Prediction} (CSP) implies predicting the fractional coordinates and lattice parameters given atom types. We will show that conditioning on the correct space groups and Wyckoff positions, enabled by SGFM, has a significantly positive effect on the quality of predicted fractional coordinates and lattice parameters.
Wyckoff positions are not known prima facia, so we use a method to predict them.
We test on five datasets and perform extensive ablation studies to assess our method. In the second task \textit{De Novo Generation} (DNG), we generate the atom types along with the fractional coordinates and lattice parameters.

\textbf{Datasets.} We evaluate our method on five datasets: \textit{MP}-$20$ \citep{jain2013materials}, with 45,231 diverse crystals from the Materials Project; \textit{MPTS-}$52$, a time-ordered variant with $40,476$ crystals featuring larger unit cells; and \textit{Alex-MP}-$20$, a large-scale set of $607,684$ crystals combining MP-$20$ and Alexandria data \citep{schmidt2022large, schmidt2022alexandria_2}. 
We also assess CSP on two unit-test style datasets: \textit{Perov}-$5$ \citep{castelli2012new}, with $18,928$ perovskites sharing a common structure but varying atom types, and \textit{Carbon}-$24$ \citep{pickard2020cabon}, containing $10,153$ carbon crystals with diverse structures.

\textbf{Baselines.} We compared SGFM to several state-of-the-art baselines. Methods that do not incorporate space group information in their generation process include \textit{CDVAE} \citep{xie2021crystal}, \textit{ADiT} \citep{joshi2025adit}, \textit{FlowMM} \citep{miller2024flowmm}, \textit{FlowLLM} \citep{sriram2024flowllm}, and \textit{OMatG} \citep{hoellmer2025omat}. In contrast, \textit{SymmCD} \citep{levy2025symmcd}, \textit{DiffCSP++} \citep{jiao2024space},  \textit{WyFormer} \citep{kazeev2025wyckofftransformer}, and \textit{SGEquiDiff} \citep{chang2025space} explicitly incorporate space group information. Additional details on each baseline are provided in \cref{app:baseline}.

\textbf{Model Details.} To model $\hat{u}_t$, we adopt the  architecture used in \citet{miller2024flowmm}, which utilizes  \textit{EGNN}  \citep{satorras2022egnn} to handle fractional coordinates. The model applies sinusoidal embeddings to the fractional coordinates, ensuring invariance to lattice translations in addition to the space group equivariance. A description of the architecture and the hyperparameters used in each experiment are provided in  \cref{app:model_details}. For improved sampling quality, we apply inference anti-annealing \citep{yim2023fast,bose2023se} that adjusts the prediction velocity during generation.

\begin{wrapfigure}[23]{r}{0.35\textwidth}
\centering
\vspace{7pt}
\includegraphics[width=0.35\textwidth]{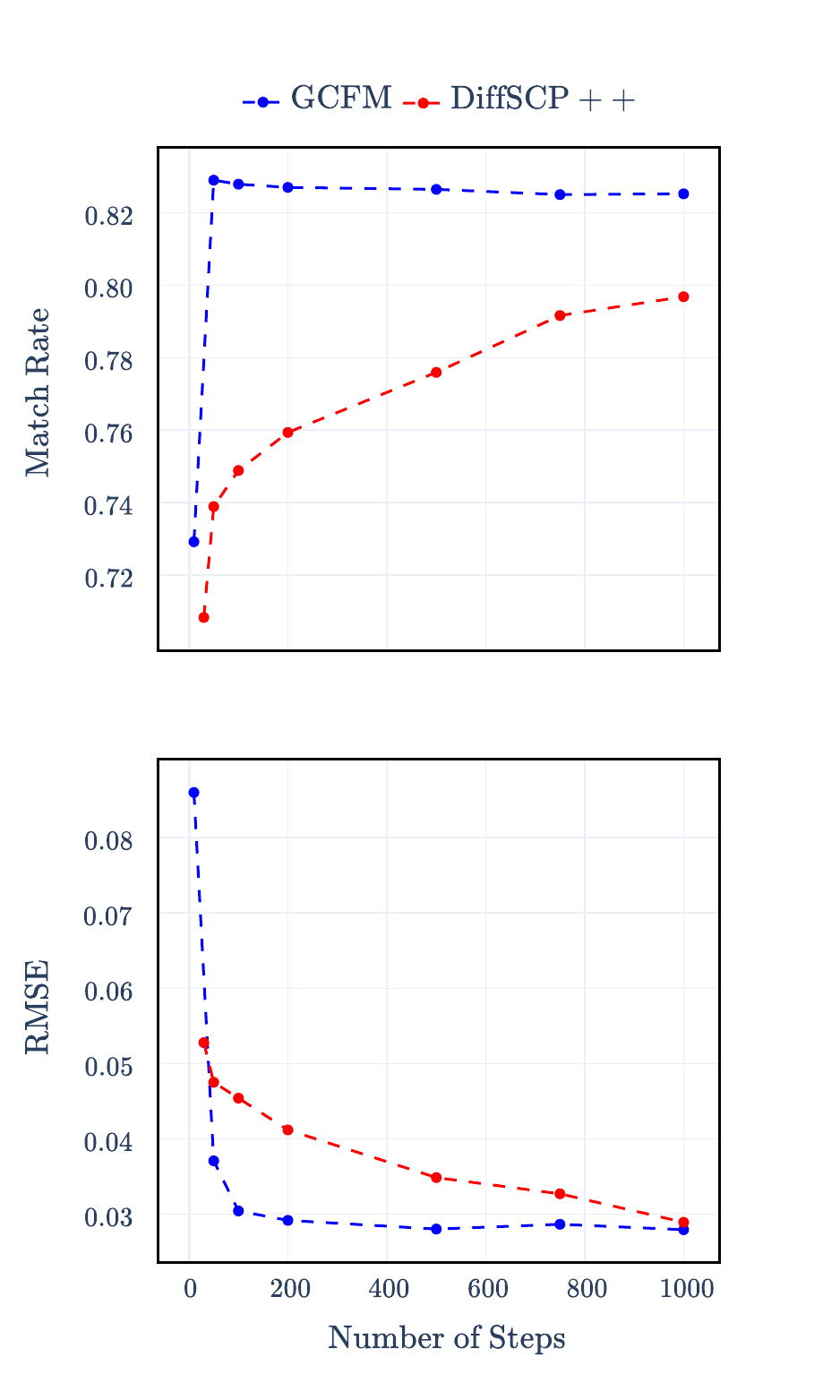}
\vspace{-27pt}
\caption{MR (up) and RMSE (down) as a function of generation steps on MP-$20$.}
\label{fig:SGFM_vs_diffcsp++}

\end{wrapfigure}
\subsection*{Crystal Structure Prediction}
The generative task in CSP requires sampling from the conditional target distribution $c\sim q(\cdot|\mA)$, where $\mA$ denotes a predefined atom type composition. This conditioning implies that during both training and generation $\mA_t=\mA$ for all $t\in\brac{0,1}$, effectively ignoring the loss term $\gL^{\mA}$ and the atom type component $\hat{u}^A_t(c_t| G)$.
For evaluation, a crystal structure is generated for each entry in the test set—which provides empirical samples of a space group, Wyckoff positions, and atom types—and then compared against the corresponding ground truth structure using pymatgen \texttt{StructureMatcher} \citep{ong2013python} with same threshold values as in \citet{jiao2024space}. We report two metrics: the match rate (MR), defined as the fraction of generated structures that successfully match their ground truth counterparts, and the RMSE, averaged over all matched pairs. We conduct CSP across all datasets, with results summarized in  \cref{tab:csp_results}. Two key conclusions emerge: first, incorporating space group and Wyckoff position information significantly enhances structure generation performance; second, among the methods that leverage this additional information, SGFM consistently outperforms others across all shared datasets, achieving state-of-the-art results. Comparing CSP accuracy as a function of generation steps (\cref{fig:SGFM_vs_diffcsp++}), we observe that SGFM reaches near-optimal accuracy within just 50–100 steps, whereas DiffCSP++ converges more slowly, requiring up to 1000 steps to approach its best performance—while still showing a notable gap in match rate compared to SGFM.

\begin{wraptable}[6]{r}{0.4\textwidth}
\vspace{-22pt}
\caption{Noise prior vs. model type comparison. The reported result is MR on the MP-$20$ dataset.} \label{tab:model_type_ablation}
\vspace{5pt}
\renewcommand{\tabcolsep}{1.6pt}
\begin{adjustbox}{width=0.4\textwidth}
\begin{tabular}{l|cc}
Noise Prior \textbackslash Model & Equivariant & Non-Equivariant \\ \hline
Wyckoff Conditional              & $\textbf{82.74}$     & $68.16$           \\
Uniform                          & -           & $64.49$    \\
\end{tabular}
\end{adjustbox}
\end{wraptable}
 We ran an experiment to evaluate the impact of SGFM's components.
\Cref{tab:model_type_ablation} compares the CSP evaluations of several models: SGFM, another using a non-equivariant vector field with a uniform noise prior, and a third using a non-equivariant vector field with a conditional noise prior. The results show that starting with a conditional noise prior offers some advantage on its own. Its combination with the equivariant model, however, yields a significant improvement. The missing combination was not evaluated because the uniform distribution does not provide the initial conditions useful for an equivariant vector field.

% \begin{table}[]
% \centering
% \small
% \renewcommand{\tabcolsep}{1.6pt}
% \caption{CSPML comparison}\label{tab:cspml}
% \vspace{10pt}
% \begin{tabular}{l|cc|cc|cc}
% \multirow{2}{*}{Model} & \multicolumn{2}{c|}{MP-$20$}           & \multicolumn{2}{c|}{MPTS-$52$}         & \multicolumn{2}{c}{Perov-$5$}          \\
%                        & MR (\%) $\uparrow$ & RMSE $\downarrow$ & MR (\%) $\uparrow$ & RMSE $\downarrow$ & MR (\%) $\uparrow$ & RMSE $\downarrow$ \\ \hline
% CSPML                  & $70.51$            & $.0388$           & $\textbf{36.98}$            & $\textbf{.0664}$           & $51.84$            & $.1066$           \\
% Diffcsp++    & $\textbf{70.58}$            & $\textbf{.0272}$           & $37.17$            & $.0676$           & $52.17$            & $.0841$           \\
% SGFM         & $70.13$            & $.0310$           & $35.09$            & $.0739$           & $\textbf{54.10}$            & $\textbf{.1508}$          
% \end{tabular}
% \end
\begin{wraptable}[6]{r}{0.3\textwidth}
\vspace{-20pt}
\centering
\caption{CSPML evaluation. The reported metric is MR.}\label{tab:cspml}
\vspace{5pt}
\renewcommand{\tabcolsep}{1.6pt}
\begin{adjustbox}{width=0.3\textwidth}
\begin{tabular}{l|ccc}

Model     & MP-$20$          & MPTS-$52$        & Perov-$5$        \\ \hline
CSPML     & $70.51$          & $36.98$          & $51.84$          \\
DiffCSP++ & $\textbf{70.58}$ & $\textbf{37.17}$ & $52.17$          \\
SGFM      & $70.13$          & $35.09$          & $\textbf{54.10}$
\end{tabular}
\end{adjustbox}
\end{wraptable}
Obtaining the ground truth space group and Wyckoff positions corresponding to a given atom type composition can be challenging. 
To address this, we leverage \textit{CSPML} \citep{kusaba2022crystal}, a metric learning-based model that, given an atom type composition, retrieves a similar composition from a template set—along with its associated space group and Wyckoff positions. Specifically, for each crystal in the test set, we match its atom type composition 
$\mA$ to a template composition $\mA'$, from the training set. We then use the space group and Wyckoff positions of $\mA'$ as conditioning inputs for SGFM. The generated structure is subsequently compared to the ground truth structure from the test set.  \Cref{tab:cspml} presents a summary of our evaluation using CSPML and a comparison to baselines. 
% \citet{kusaba2022crystal} provides a detailed overview of the CSPML pipeline.

\begin{table}[htb]
\centering
\small
\renewcommand{\tabcolsep}{1.6pt}
\caption{DNG evaluation. Models were trained on the MP-$20$ dataset. NFE refers to the number of generation steps per sample. We evaluated all the S.U.N. metrics, except those marked with $^*$. }
\vspace{10pt}
\resizebox{\textwidth}{!}{%
\begin{tabular}{lccclccclcclcc}
\toprule
\multirow{2}{*}{Model} & \multicolumn{1}{l}{\multirow{2}{*}{NFE}} & \multicolumn{2}{c}{Validity $(\%)\uparrow$} & $\,$                 & \multicolumn{3}{c}{Property $\downarrow$}        & $\,$                 & Stable $(\%)\uparrow$        & S.U.N $(\%)\uparrow$       & $\,$                 & Stable $(\%)\uparrow$       & S.U.N $(\%)\uparrow$      \\ \cline{3-4} \cline{6-8} \cline{10-11} \cline{13-14} 
                       & \multicolumn{1}{l}{}                     & Structural           & Composition          &                      & $d_{\rho}$ & $d_{\text{elem}}$ & $d_{\text{cn}}$ &                      & \multicolumn{2}{c}{$E_{\text{hull}}<100\text{ meV/Atom}$} &                      & \multicolumn{2}{c}{$E_{\text{hull}}<0\text{ meV/Atom}$} \\ \hline
CDVAE                  & $5000$                                   & $\textbf{100.00}$             & $86.70$              &                      & $0.688$    & $0.278$           & -               &                      & -                            & -                          &                      & -                           & -                         \\
ADiT                   & $500$                                    & $99.74$              & $\textbf{92.14}$              &                      & -          & -                 & -               &                      & $\textbf{72.0}$                       & $27.4$                    &                      & $13.0$                      & $4.6$                     \\
FlowMM                 & $500$                                    & $96.86$              & $83.24$              &                      & $0.075$    & $0.079$           & 0.443           &                      & $31.2$                       & $19.7$                     &                      & $4.6$                       & $2.3$                     \\
FlowLLM                & $250$                                    & $99.81$              & $89.05$              &                      & $0.660$     & $0.090$            & -               &                      & $67.9 
$                       & $21.9$                    &                      & $14.2$                     & $3.6$                     \\
OMatG                  & $680$                                    & $95.05$              & $82.84$              &                      & $\textbf{0.060}$    & $\textbf{0.017}$           & $0.165$         &                      & $44.4$                       & $23.7$                     &                      & $6.6$                       & $2.2$                     \\ \hline
SymmCD                 & $1000$                                   & $90.34$              & $85.81$              &                      & $0.230$    & $0.400$           & -               &                      & -                            & -                          &                      & -                           & -                         \\
DiffCSP++ (empirical)  & $1000$                                   & $99.94$              & $85.12$              &                      & $0.235$    & $0.374$           & -               &                      & $31.4$                       & $21.1$                     &                      & $7.2$                       & $4.0$                     \\
DiffCSP++ (Wyformer)   & $1000$                                   & $99.66$              & $80.34$              &                      & $0.670$    & $0.098$           & -               &                      & -                            & -                          &                      & -                           & $3.8^*$                  \\
SGEquiDiff             & $1000$                                         & $99.25$              & $86.16$              & \multicolumn{1}{c}{} & $0.193$    & $0.209$           & -               & \multicolumn{1}{c}{} & -                            & $25.8^*$                  & \multicolumn{1}{c}{} & -                           & -                         \\ \hline
SGFM (empirical)       & $500$                                    & $99.87$              & $86.81$              &                      & $0.075$    & $0.181$           & $\textbf{0.076}$         &                      & $64.1$                       & $\textbf{30.3}$                     &                      & $\textbf{14.6}$                      & $\textbf{6.9}$                     \\
SGFM (Wyformer)        & $500$                                    & $99.87$              & $84.76$              & \multicolumn{1}{c}{} & $0.237$    & $0.233$           & -               & \multicolumn{1}{c}{} & $48.4$                            & $22.6$                          & \multicolumn{1}{c}{} & $10.6$                           & $4.7$                         \\ \bottomrule
\end{tabular}}
\label{tab:dng_results}
\vspace{-10pt}
\end{table}

\subsection*{De Novo Generation}
We evaluate the ability of our generative model to discover thermodynamically stable and novel crystals, identify the validity of the generated samples, and investigate divergences in distributions of properties. The results and baselines are shown in \cref{tab:dng_results}. Next, we will describe the various evaluation metrics. \emph{Validity \%} defines two different heuristics that realsitic crystals should satsify. \emph{Structural validity} implies that the pairwise atomic distances of a crystal's atoms are all greater than $0.5$\AA. \emph{Compositional validity} implies that a crystal has a neutral charge according to so-called SMACT \citep{davies2019smact} rules. The \emph{properties} that we consider for computing divergences include $\rho$ the atomic density defined by number of atoms divided by unit cell volume, $\text{elem}$ (airity) the number of unique elements in a crystal, and $\text{cn}$ (coordination number) or the the number of bonds per atom on average. We report the Wasserstein divergence between the test set and a structurally and compositionally valid subset of $1000$ generated samples.
Finally, we the \emph{thermodynamic stability}, \emph{novelty}, and \emph{uniqueness} of generated crystals.
Thermodynamic stability implies a structure is at or near a local minima in composition space. This requires a short explanation which can be read in \cref{app:dft}. 
We then compute the uniqueness and novelty of each stable crystal (\emph{S.U.N.}) against other generations and the train and validation set, respectively using \texttt{StructureMatcher} \citep{ong2013python} with default settings.
% \texttt{ltol=0.2, stol=0.3, angle\_tol=5}.
We trained SGFM on the MP-$20$ dataset, including an atom type prediction module, and generated structures from each of our configurations for evaluation. The configurations include using Wyckoff positions taken from the train set, denoted \emph{empirical}, and from the output of \emph{Wyformer} \citep{kazeev2025wyckofftransformer}, with an eponymous denotation. 
% We only conditioned on the Wyckoff positions from Wyformer, we predicted the atom types! 
All systems were evaluated with $10,000$ samples, except DiffCSP++ (empirical) that uses only $1,000$ samples. 
DiffCSP++ (Wyformer) \citep{kazeev2025wyckofftransformer} and SGEquiDiff \citep{chang2025space} are reported results with slightly different density functional theory settings and only 100 relaxations, respectively. 

\section{Related Work}
\label{sec:related_work}
There is a growing body of literature about generative models for inorganic crystals. We focus here on works with similar inductive biases, namely explicit utilization of Wyckoff positions. 
We first consider works that generate atomic coordinates. \citet{cao2024space} created an autoregressive model that generates crystals sequentially in Wyckoff position's lexicographic order. \citet{jiao2024space, levy2025symmcd} produced diffusion models that both represent crystals within the asymmetric unit, a memory-efficient formulation that contains just one representative per orbit. Neither of these methods utilize space group equivariance and both require projection steps to keep atomic coordinates within the target Wyckoff positions. A concurrently developed diffusion model by \citet{chang2025space} also utilizes the asymmetric unit; however, it does utilize space group equivariance via group averaging. Working in the asymmetric unit does not allow for our efficient reformulation in \cref{eq:efficient_GA}. As written, \citet{levy2025symmcd} do not address the crystal structure prediction problem. 
There are also a class of models that generate coarse-grained Wyckoff positions alone, ignoring explicit atomic coordinates.  \citep{zhu2024wycryst, kazeev2025wyckofftransformer} both take this approach, inspired by regression methods \citep{goodall2020predicting, goodall2022rapid}. These models synergize with ours and generate Wyckoff positions for SGFM to use during de novo generation in \cref{sec:exp}.
Further discussion of other relevant work is left for \cref{app:related_work_appendix}.

\section{Conclusions}
In this work, we introduced SGFM, a FM based generative model for crystal structures, conditioned on space group and Wyckoff positions. By design, SGFM produces crystals that satisfy symmetry constraints, relying on sufficient conditions we formulated over the noise prior and vector field. We also implemented an efficient group averaging method, enabling the incorporation of space group equivariance into the vector field model with minimal overhead. Evaluated on both CSP and DNG tasks, SGFM achieved state-of-the-art performance. Future directions include extending the model to an unconditional generation setting, where space group and Wyckoff positions are also generated rather than specified.
\section{Acknowledgments}
Research was partially supported by the Israeli Council for Higher Education (CHE) via the Weizmann Data Science Research Center and by research grants from the Estates of Tully and Michele Plesser and the Anita James Rosen and Harry Schutzman Foundations.  

\bibliography{arxiv}

\begin{thebibliography}{57}
\providecommand{\natexlab}[1]{#1}
\providecommand{\url}[1]{\texttt{#1}}
\expandafter\ifx\csname urlstyle\endcsname\relax
  \providecommand{\doi}[1]{doi: #1}\else
  \providecommand{\doi}{doi: \begingroup \urlstyle{rm}\Url}\fi

\bibitem[Albergo \& Vanden-Eijnden(2022)Albergo and Vanden-Eijnden]{albergo2022building}
Michael~S Albergo and Eric Vanden-Eijnden.
\newblock Building normalizing flows with stochastic interpolants.
\newblock \emph{arXiv preprint arXiv:2209.15571}, 2022.

\bibitem[Albergo et~al.(2023)Albergo, Boffi, and Vanden-Eijnden]{albergo2023stochasticinterpolants}
Michael~S. Albergo, Nicholas~M. Boffi, and Eric Vanden-Eijnden.
\newblock Stochastic interpolants: A unifying framework for flows and diffusions, 2023.
\newblock URL \url{https://arxiv.org/abs/2303.08797}.

\bibitem[Barroso-Luque et~al.(2024)Barroso-Luque, Shuaibi, Fu, Wood, Dzamba, Gao, Rizvi, Zitnick, and Ulissi]{barroso2024open}
Luis Barroso-Luque, Muhammed Shuaibi, Xiang Fu, Brandon~M Wood, Misko Dzamba, Meng Gao, Ammar Rizvi, C~Lawrence Zitnick, and Zachary~W Ulissi.
\newblock Open materials 2024 (omat24) inorganic materials dataset and models.
\newblock \emph{arXiv preprint arXiv:2410.12771}, 2024.

\bibitem[Bose et~al.(2023)Bose, Akhound-Sadegh, Fatras, Huguet, Rector-Brooks, Liu, Nica, Korablyov, Bronstein, and Tong]{bose2023se}
Avishek~Joey Bose, Tara Akhound-Sadegh, Kilian Fatras, Guillaume Huguet, Jarrid Rector-Brooks, Cheng-Hao Liu, Andrei~Cristian Nica, Maksym Korablyov, Michael Bronstein, and Alexander Tong.
\newblock Se (3)-stochastic flow matching for protein backbone generation.
\newblock \emph{arXiv preprint arXiv:2310.02391}, 2023.

\bibitem[Cao et~al.(2024)Cao, Luo, Lv, and Wang]{cao2024space}
Zhendong Cao, Xiaoshan Luo, Jian Lv, and Lei Wang.
\newblock Space group informed transformer for crystalline materials generation, 2024.
\newblock URL \url{https://arxiv.org/abs/2403.15734}.

\bibitem[Castelli et~al.(2012)Castelli, Landis, Thygesen, Dahl, Chorkendorff, Jaramillo, and Jacobsen]{castelli2012new}
Ivano~E Castelli, David~D Landis, Kristian~S Thygesen, S{\o}ren Dahl, Ib~Chorkendorff, Thomas~F Jaramillo, and Karsten~W Jacobsen.
\newblock New cubic perovskites for one-and two-photon water splitting using the computational materials repository.
\newblock \emph{Energy \& Environmental Science}, 5\penalty0 (10):\penalty0 9034--9043, 2012.

\bibitem[Chang et~al.(2025)Chang, Pak, Guerra, Zhan, Richardson, Ertekin, and Adams]{chang2025space}
Rees Chang, Angela Pak, Alex Guerra, Ni~Zhan, Nick Richardson, Elif Ertekin, and Ryan~P Adams.
\newblock Space group equivariant crystal diffusion.
\newblock \emph{arXiv preprint arXiv:2505.10994}, 2025.

\bibitem[Chen et~al.(2022)Chen, Setty, Hu, Vergniory, Grefe, Fischer, Yan, Eguchi, Prokofiev, Paschen, Cano, and Si]{Chen2022Topological_semimetal}
Lei Chen, Chandan Setty, Haoyu Hu, Maia~G. Vergniory, Sarah~E. Grefe, Lukas Fischer, Xinlin Yan, Gaku Eguchi, Andrey Prokofiev, Silke Paschen, Jennifer Cano, and Qimiao Si.
\newblock Topological semimetal driven by strong correlations and crystalline symmetry.
\newblock \emph{Nature Physics}, 18\penalty0 (11):\penalty0 1341–1346, September 2022.
\newblock ISSN 1745-2481.
\newblock \doi{10.1038/s41567-022-01743-4}.
\newblock URL \url{http://dx.doi.org/10.1038/s41567-022-01743-4}.

\bibitem[Chen \& Lipman(2024)Chen and Lipman]{chen2024flowRiemann}
Ricky T.~Q. Chen and Yaron Lipman.
\newblock Flow matching on general geometries, 2024.
\newblock URL \url{https://arxiv.org/abs/2302.03660}.

\bibitem[Davies et~al.(2019)Davies, Butler, Jackson, Skelton, Morita, and Walsh]{davies2019smact}
Daniel~W Davies, Keith~T Butler, Adam~J Jackson, Jonathan~M Skelton, Kazuki Morita, and Aron Walsh.
\newblock Smact: Semiconducting materials by analogy and chemical theory.
\newblock \emph{Journal of Open Source Software}, 4\penalty0 (38):\penalty0 1361, 2019.

\bibitem[Fredericks et~al.(2021)Fredericks, Parrish, Sayre, and Zhu]{pyxtal}
Scott Fredericks, Kevin Parrish, Dean Sayre, and Qiang Zhu.
\newblock Pyxtal: A python library for crystal structure generation and symmetry analysis.
\newblock \emph{Computer Physics Communications}, 261:\penalty0 107810, April 2021.
\newblock ISSN 0010-4655.
\newblock \doi{https://doi.org/10.1016/j.cpc.2020.107810}.
\newblock URL \url{http://www.sciencedirect.com/science/article/pii/S0010465520304057}.

\bibitem[Glass et~al.(2006)Glass, Oganov, and Hansen]{glass2006uspex}
Colin~W Glass, Artem~R Oganov, and Nikolaus Hansen.
\newblock Uspex—evolutionary crystal structure prediction.
\newblock \emph{Computer physics communications}, 175\penalty0 (11-12):\penalty0 713--720, 2006.

\bibitem[Goodall \& Lee(2020)Goodall and Lee]{goodall2020predicting}
Rhys~EA Goodall and Alpha~A Lee.
\newblock Predicting materials properties without crystal structure: Deep representation learning from stoichiometry.
\newblock \emph{Nature Communications}, 11\penalty0 (1):\penalty0 1--9, 2020.

\bibitem[Goodall et~al.(2022)Goodall, Parackal, Faber, Armiento, and Lee]{goodall2022rapid}
Rhys~EA Goodall, Abhijith~S Parackal, Felix~A Faber, Rickard Armiento, and Alpha~A Lee.
\newblock Rapid discovery of stable materials by coordinate-free coarse graining.
\newblock \emph{Science Advances}, 8\penalty0 (30):\penalty0 eabn4117, 2022.

\bibitem[Hoellmer et~al.(2025)Hoellmer, Egg, Martirossyan, Fuemmeler, Shui, Gupta, Prakash, Roitberg, Liu, Karypis, Transtrum, Hennig, Tadmor, and Martiniani]{hoellmer2025omat}
Philipp Hoellmer, Thomas Egg, Maya~M. Martirossyan, Eric Fuemmeler, Zeren Shui, Amit Gupta, Pawan Prakash, Adrian Roitberg, Mingjie Liu, George Karypis, Mark Transtrum, Richard~G. Hennig, Ellad~B. Tadmor, and Stefano Martiniani.
\newblock Open materials generation with stochastic interpolants, 2025.
\newblock URL \url{https://arxiv.org/abs/2502.02582}.

\bibitem[Jain et~al.(2013)Jain, Ong, Hautier, Chen, Richards, Dacek, Cholia, Gunter, Skinner, Ceder, and Persson]{jain2013materials}
Anubhav Jain, Shyue~Ping Ong, Geoffroy Hautier, Wei Chen, William~Davidson Richards, Stephen Dacek, Shreyas Cholia, Dan Gunter, David Skinner, Gerbrand Ceder, and Kristin~A. Persson.
\newblock {The Materials Project: A materials genome approach to accelerating materials innovation}.
\newblock \emph{APL Materials}, 1\penalty0 (1):\penalty0 011002, 07 2013.
\newblock ISSN 2166-532X.
\newblock \doi{10.1063/1.4812323}.
\newblock URL \url{https://doi.org/10.1063/1.4812323}.

\bibitem[Jiao et~al.(2024)Jiao, Huang, Liu, Zhao, and Liu]{jiao2024space}
Rui Jiao, Wenbing Huang, Yu~Liu, Deli Zhao, and Yang Liu.
\newblock Space group constrained crystal generation, 2024.
\newblock URL \url{https://arxiv.org/abs/2402.03992}.

\bibitem[Joshi et~al.(2025)Joshi, Fu, Liao, Gharakhanyan, Miller, Sriram, and Ulissi]{joshi2025adit}
Chaitanya~K. Joshi, Xiang Fu, Yi-Lun Liao, Vahe Gharakhanyan, Benjamin~Kurt Miller, Anuroop Sriram, and Zachary~W. Ulissi.
\newblock All-atom diffusion transformers: Unified generative modelling of molecules and materials, 2025.
\newblock URL \url{https://arxiv.org/abs/2503.03965}.

\bibitem[Kazeev et~al.(2025)Kazeev, Nong, Romanov, Zhu, Ustyuzhanin, Yamazaki, and Hippalgaonkar]{kazeev2025wyckofftransformer}
Nikita Kazeev, Wei Nong, Ignat Romanov, Ruiming Zhu, Andrey Ustyuzhanin, Shuya Yamazaki, and Kedar Hippalgaonkar.
\newblock Wyckoff transformer: Generation of symmetric crystals, 2025.
\newblock URL \url{https://arxiv.org/abs/2503.02407}.

\bibitem[Kingma \& Ba(2014)Kingma and Ba]{kingma2014adam}
Diederik~P Kingma and Jimmy Ba.
\newblock Adam: A method for stochastic optimization.
\newblock \emph{arXiv preprint arXiv:1412.6980}, 2014.

\bibitem[Kirklin et~al.(2015)Kirklin, Saal, Meredig, Thompson, Doak, Aykol, R{\"u}hl, and Wolverton]{kirklin2015open}
Scott Kirklin, James~E Saal, Bryce Meredig, Alex Thompson, Jeff~W Doak, Muratahan Aykol, Stephan R{\"u}hl, and Chris Wolverton.
\newblock The open quantum materials database (oqmd): assessing the accuracy of dft formation energies.
\newblock \emph{npj Computational Materials}, 1\penalty0 (1):\penalty0 1--15, 2015.

\bibitem[K\"{o}hler et~al.(2020)K\"{o}hler, Klein, and No\'{e}]{kohler2020}
Jonas K\"{o}hler, Leon Klein, and Frank No\'{e}.
\newblock Equivariant flows: Exact likelihood generative learning for symmetric densities.
\newblock In \emph{Proceedings of the 37th International Conference on Machine Learning}, ICML'20. JMLR.org, 2020.

\bibitem[K{\"o}hler et~al.(2023)K{\"o}hler, Invernizzi, De~Haan, and No{\'e}]{kohler2023rigid}
Jonas K{\"o}hler, Michele Invernizzi, Pim De~Haan, and Frank No{\'e}.
\newblock Rigid body flows for sampling molecular crystal structures.
\newblock In \emph{International Conference on Machine Learning}, pp.\  17301--17326. PMLR, 2023.

\bibitem[Kresse \& Furthm{\"u}ller(1996)Kresse and Furthm{\"u}ller]{kresse1996efficient}
Georg Kresse and J{\"u}rgen Furthm{\"u}ller.
\newblock Efficient iterative schemes for ab initio total-energy calculations using a plane-wave basis set.
\newblock \emph{Physical review B}, 54\penalty0 (16):\penalty0 11169, 1996.

\bibitem[Kusaba et~al.(2022)Kusaba, Liu, and Yoshida]{kusaba2022crystal}
Minoru Kusaba, Chang Liu, and Ryo Yoshida.
\newblock Crystal structure prediction with machine learning-based element substitution.
\newblock \emph{Computational Materials Science}, 211:\penalty0 111496, 2022.

\bibitem[Levy et~al.(2025)Levy, Panigrahi, Kaba, Zhu, Lee, Galkin, Miret, and Ravanbakhsh]{levy2025symmcd}
Daniel Levy, Siba~Smarak Panigrahi, Sékou-Oumar Kaba, Qiang Zhu, Kin Long~Kelvin Lee, Mikhail Galkin, Santiago Miret, and Siamak Ravanbakhsh.
\newblock Symmcd: Symmetry-preserving crystal generation with diffusion models, 2025.
\newblock URL \url{https://arxiv.org/abs/2502.03638}.

\bibitem[Lin et~al.(2024)Lin, Chen, Jiao, Mo, Jianhuan, Huang, Liu, Huang, and Lu]{lin2024equivariant}
Peijia Lin, Pin Chen, Rui Jiao, Qing Mo, Cen Jianhuan, Wenbing Huang, Yang Liu, Dan Huang, and Yutong Lu.
\newblock Equivariant diffusion for crystal structure prediction.
\newblock In \emph{Forty-first International Conference on Machine Learning}, 2024.
\newblock URL \url{https://openreview.net/forum?id=VRv8KjJNuj}.

\bibitem[Lipman et~al.(2022)Lipman, Chen, Ben-Hamu, Nickel, and Le]{lipman2022flow}
Yaron Lipman, Ricky~TQ Chen, Heli Ben-Hamu, Maximilian Nickel, and Matthew Le.
\newblock Flow matching for generative modeling.
\newblock In \emph{The Eleventh International Conference on Learning Representations}, 2022.

\bibitem[Lipman et~al.(2024)Lipman, Havasi, Holderrieth, Shaul, Le, Karrer, Chen, Lopez-Paz, Ben-Hamu, and Gat]{lipman2024flowmatchingguidecode}
Yaron Lipman, Marton Havasi, Peter Holderrieth, Neta Shaul, Matt Le, Brian Karrer, Ricky T.~Q. Chen, David Lopez-Paz, Heli Ben-Hamu, and Itai Gat.
\newblock Flow matching guide and code, 2024.
\newblock URL \url{https://arxiv.org/abs/2412.06264}.

\bibitem[Liu et~al.(2022)Liu, Gong, and Liu]{liu2022flow}
Xingchao Liu, Chengyue Gong, and Qiang Liu.
\newblock Flow straight and fast: Learning to generate and transfer data with rectified flow.
\newblock \emph{arXiv preprint arXiv:2209.03003}, 2022.

\bibitem[Malgrange et~al.(2014)Malgrange, Ricolleau, and Schlenker]{malgrange2014symmetry}
C{\'e}cile Malgrange, Christian Ricolleau, and Michel Schlenker.
\newblock \emph{Symmetry and physical properties of crystals}.
\newblock Springer, 2014.

\bibitem[Merchant et~al.(2023)Merchant, Batzner, Schoenholz, Aykol, Cheon, and Cubuk]{merchant2023scaling}
Amil Merchant, Simon Batzner, Samuel~S Schoenholz, Muratahan Aykol, Gowoon Cheon, and Ekin~Dogus Cubuk.
\newblock Scaling deep learning for materials discovery.
\newblock \emph{Nature}, pp.\  1--6, 2023.

\bibitem[Mila~AI4Science et~al.(2023)Mila~AI4Science, Hernandez-Garcia, Duval, Volokhova, Bengio, Sharma, Carrier, Koziarski, and Schmidt]{milaai4science2023crystalgfn}
{}~Mila~AI4Science, Alex Hernandez-Garcia, Alexandre Duval, Alexandra Volokhova, Yoshua Bengio, Divya Sharma, Pierre~Luc Carrier, Micha{\l} Koziarski, and Victor Schmidt.
\newblock {Crystal-GFN: sampling crystals with desirable properties and constraints}.
\newblock \emph{AI for Accelerated Materials Design Workshop (NeurIPS)}, 2023.

\bibitem[Miller et~al.(2024)Miller, Chen, Sriram, and Wood]{miller2024flowmm}
Benjamin~Kurt Miller, Ricky T.~Q. Chen, Anuroop Sriram, and Brandon~M Wood.
\newblock Flowmm: Generating materials with riemannian flow matching, 2024.
\newblock URL \url{https://arxiv.org/abs/2406.04713}.

\bibitem[Ong et~al.(2013)Ong, Richards, Jain, Hautier, Kocher, Cholia, Gunter, Chevrier, Persson, and Ceder]{ong2013python}
Shyue~Ping Ong, William~Davidson Richards, Anubhav Jain, Geoffroy Hautier, Michael Kocher, Shreyas Cholia, Dan Gunter, Vincent~L Chevrier, Kristin~A Persson, and Gerbrand Ceder.
\newblock Python materials genomics (pymatgen): A robust, open-source python library for materials analysis.
\newblock \emph{Computational Materials Science}, 68:\penalty0 314--319, 2013.

\bibitem[Perdew et~al.(1996)Perdew, Burke, and Ernzerhof]{perdew1996generalized}
John~P Perdew, Kieron Burke, and Matthias Ernzerhof.
\newblock Generalized gradient approximation made simple.
\newblock \emph{Physical review letters}, 77\penalty0 (18):\penalty0 3865, 1996.

\bibitem[Pickard(2020)]{pickard2020cabon}
Chris~J. Pickard.
\newblock Airss data for carbon at 10gpa and the c+n+h+o system at 1gpa.
\newblock \emph{Materials Cloud Archive}, 2020.0026/v1, 2020.
\newblock \doi{10.24435/materialscloud:2020.0026/v1}.

\bibitem[Pickard \& Needs(2011)Pickard and Needs]{pickard2011ab}
Chris~J Pickard and RJ~Needs.
\newblock Ab initio random structure searching.
\newblock \emph{Journal of Physics: Condensed Matter}, 23\penalty0 (5):\penalty0 053201, 2011.

\bibitem[Puny et~al.(2021)Puny, Atzmon, Ben-Hamu, Smith, Misra, Grover, and Lipman]{puny2021frame}
Omri Puny, Matan Atzmon, Heli Ben-Hamu, Edward~J Smith, Ishan Misra, Aditya Grover, and Yaron Lipman.
\newblock Frame averaging for invariant and equivariant network design.
\newblock \emph{Tenth International Conference on Learning Representations (ICLR)}, 2021.

\bibitem[Rezende et~al.(2019)Rezende, Racanière, Higgins, and Toth]{rezende2019equivarianthamiltonianflows}
Danilo~Jimenez Rezende, Sébastien Racanière, Irina Higgins, and Peter Toth.
\newblock Equivariant hamiltonian flows, 2019.
\newblock URL \url{https://arxiv.org/abs/1909.13739}.

\bibitem[Saal et~al.(2013)Saal, Kirklin, Aykol, Meredig, and Wolverton]{saal2013materials}
James~E Saal, Scott Kirklin, Muratahan Aykol, Bryce Meredig, and Christopher Wolverton.
\newblock Materials design and discovery with high-throughput density functional theory: the open quantum materials database (oqmd).
\newblock \emph{Jom}, 65\penalty0 (11):\penalty0 1501--1509, 2013.

\bibitem[Satorras et~al.(2022)Satorras, Hoogeboom, and Welling]{satorras2022egnn}
Victor~Garcia Satorras, Emiel Hoogeboom, and Max Welling.
\newblock E(n) equivariant graph neural networks, 2022.
\newblock URL \url{https://arxiv.org/abs/2102.09844}.

\bibitem[Schmidt et~al.(2022{\natexlab{a}})Schmidt, Hoffmann, Wang, Borlido, Carri{\c{c}}o, Cerqueira, Botti, and Marques]{schmidt2022large}
Jonathan Schmidt, Noah Hoffmann, Hai-Chen Wang, Pedro Borlido, Pedro~JMA Carri{\c{c}}o, Tiago~FT Cerqueira, Silvana Botti, and Miguel~AL Marques.
\newblock Large-scale machine-learning-assisted exploration of the whole materials space.
\newblock \emph{arXiv preprint arXiv:2210.00579}, 2022{\natexlab{a}}.

\bibitem[Schmidt et~al.(2022{\natexlab{b}})Schmidt, Wang, Cerqueira, Botti, and Marques]{schmidt2022alexandria_2}
Jonathan Schmidt, Hai-Chen Wang, Tiago~FT Cerqueira, Silvana Botti, and Miguel~AL Marques.
\newblock A dataset of 175k stable and metastable materials calculated with the pbesol and scan functionals.
\newblock \emph{Scientific Data}, 9\penalty0 (1):\penalty0 64, 2022{\natexlab{b}}.

\bibitem[Song et~al.(2023)Song, Gong, Xu, Cao, Lan, Ermon, Zhou, and Ma]{song2023equivariant}
Yuxuan Song, Jingjing Gong, Minkai Xu, Ziyao Cao, Yanyan Lan, Stefano Ermon, Hao Zhou, and Wei-Ying Ma.
\newblock Equivariant flow matching with hybrid probability transport, 2023.
\newblock URL \url{https://arxiv.org/abs/2312.07168}.

\bibitem[Sriram et~al.(2024)Sriram, Miller, Chen, and Wood]{sriram2024flowllm}
Anuroop Sriram, Benjamin~Kurt Miller, Ricky T.~Q. Chen, and Brandon~M. Wood.
\newblock Flowllm: Flow matching for material generation with large language models as base distributions, 2024.
\newblock URL \url{https://arxiv.org/abs/2410.23405}.

\bibitem[Tang et~al.(2019)Tang, Po, Vishwanath, and Wan]{Tang2019topological_materials}
Feng Tang, Hoi~Chun Po, Ashvin Vishwanath, and Xiangang Wan.
\newblock Comprehensive search for topological materials using symmetry indicators.
\newblock \emph{Nature}, 566\penalty0 (7745):\penalty0 486–489, February 2019.
\newblock ISSN 1476-4687.
\newblock \doi{10.1038/s41586-019-0937-5}.
\newblock URL \url{http://dx.doi.org/10.1038/s41586-019-0937-5}.

\bibitem[Wang et~al.(2021)Wang, Botti, and Marques]{wang2021predicting}
Hai-Chen Wang, Silvana Botti, and Miguel~AL Marques.
\newblock Predicting stable crystalline compounds using chemical similarity.
\newblock \emph{npj Computational Materials}, 7\penalty0 (1):\penalty0 12, 2021.

\bibitem[Wirnsberger et~al.(2022)Wirnsberger, Papamakarios, Ibarz, Racani{\`e}re, Ballard, Pritzel, and Blundell]{wirnsberger2022normalizing}
Peter Wirnsberger, George Papamakarios, Borja Ibarz, S{\'e}bastien Racani{\`e}re, Andrew~J Ballard, Alexander Pritzel, and Charles Blundell.
\newblock Normalizing flows for atomic solids.
\newblock \emph{Machine Learning: Science and Technology}, 3\penalty0 (2):\penalty0 025009, 2022.

\bibitem[Wu et~al.(2018)Wu, Ramsundar, Feinberg, Gomes, Geniesse, Pappu, Leswing, and Pande]{wu2018qm9}
Zhenqin Wu, Bharath Ramsundar, Evan~N Feinberg, Joseph Gomes, Caleb Geniesse, Aneesh~S Pappu, Karl Leswing, and Vijay Pande.
\newblock Moleculenet: a benchmark for molecular machine learning.
\newblock \emph{Chemical science}, 9\penalty0 (2):\penalty0 513--530, 2018.

\bibitem[Xie et~al.(2021)Xie, Fu, Ganea, Barzilay, and Jaakkola]{xie2021crystal}
Tian Xie, Xiang Fu, Octavian-Eugen Ganea, Regina Barzilay, and Tommi~S Jaakkola.
\newblock Crystal diffusion variational autoencoder for periodic material generation.
\newblock In \emph{International Conference on Learning Representations}, 2021.
\newblock URL \url{https://arxiv.org/abs/2110.06197}.

\bibitem[Yang(2005)]{yang2005introduction}
Jiashi Yang.
\newblock \emph{An introduction to the theory of piezoelectricity}.
\newblock Springer, 2005.

\bibitem[Yang et~al.(2023)Yang, Cho, Merchant, Abbeel, Schuurmans, Mordatch, and Cubuk]{yang2023scalable}
Mengjiao Yang, KwangHwan Cho, Amil Merchant, Pieter Abbeel, Dale Schuurmans, Igor Mordatch, and Ekin~Dogus Cubuk.
\newblock Scalable diffusion for materials generation.
\newblock \emph{arXiv preprint arXiv:2311.09235}, 2023.

\bibitem[Yarotsky(2022)]{yarotsky2022universal}
Dmitry Yarotsky.
\newblock Universal approximations of invariant maps by neural networks.
\newblock \emph{Constructive Approximation}, 55\penalty0 (1):\penalty0 407--474, 2022.

\bibitem[Yim et~al.(2023)Yim, Campbell, Foong, Gastegger, Jim{\'e}nez-Luna, Lewis, Satorras, Veeling, Barzilay, Jaakkola, et~al.]{yim2023fast}
Jason Yim, Andrew Campbell, Andrew~YK Foong, Michael Gastegger, Jos{\'e} Jim{\'e}nez-Luna, Sarah Lewis, Victor~Garcia Satorras, Bastiaan~S Veeling, Regina Barzilay, Tommi Jaakkola, et~al.
\newblock Fast protein backbone generation with se (3) flow matching.
\newblock \emph{arXiv preprint arXiv:2310.05297}, 2023.

\bibitem[Zeni et~al.(2023)Zeni, Pinsler, Z{\"u}gner, Fowler, Horton, Fu, Shysheya, Crabb{\'e}, Sun, Smith, et~al.]{zeni2023mattergen}
Claudio Zeni, Robert Pinsler, Daniel Z{\"u}gner, Andrew Fowler, Matthew Horton, Xiang Fu, Sasha Shysheya, Jonathan Crabb{\'e}, Lixin Sun, Jake Smith, et~al.
\newblock Mattergen: a generative model for inorganic materials design.
\newblock \emph{arXiv preprint arXiv:2312.03687}, 2023.
\newblock URL \url{https://arxiv.org/abs/2312.03687}.

\bibitem[Zhu et~al.(2024)Zhu, Nong, Yamazaki, and Hippalgaonkar]{zhu2024wycryst}
Ruiming Zhu, Wei Nong, Shuya Yamazaki, and Kedar Hippalgaonkar.
\newblock Wycryst: Wyckoff inorganic crystal generator framework.
\newblock \emph{Matter}, 7\penalty0 (10):\penalty0 3469--3488, 2024.

\end{thebibliography}
\bibliographystyle{iclr2026_conference}

\appendix
\section{Proofs}
\subsection{Proof of \texorpdfstring{\cref{theorem:kohler}}{}}
\label{app:kohler_proof}

\begin{proof}
The proof has two main parts. First, we will show that the flow $\psi_t$ defined by the $G$-equivariant vector field $u_t$ is $G$-equivariant. Then, we will use this property to demonstrate that the resulting probability path $p_t$ is $G$-invariant. As a reminder, the flow $\psi:\brac{0,1}\times\gX\to\gX$ is defined by the following ODE:  
\begin{align}\label{eq:flow_ode}
    \frac{d}{dt}\psi_t(x)&=u_t(\psi_t(x)) \\
    \psi_0(x)&=x
\end{align}
To demonstrate that $\psi_t$ is equivariant, we will show that two functions, $\varphi_t(x)\coloneq \psi_t(g\cdot x)$ and $\phi_t(x)=g\cdot\psi_t(x)$ (for arbitrary $g\in G$) satisfy the same ODE with identical initial conditions.
\begin{align*}
    \frac{d}{dt}\varphi_t(x)= \frac{d}{dt}\psi_t(g\cdot x)&=u_t(\psi_t(g\cdot x))=u_t(\varphi_t(x)) \\
    \varphi_0(x)=\psi_0(g&\cdot x)= g\cdot x
\end{align*}
\begin{align*}
\frac{d}{dt}\phi_t(x)= \frac{d}{dt}g\cdot\psi_t(x)=g\cdot\frac{d}{dt}\psi_t(x)&=g\cdot u_t(\psi_t(x))=u_t(g\cdot \psi_t(x))=u_t(\phi_t(x)) \\
\phi_0(x)=g\cdot&\psi_0(x)=g\cdot x
\end{align*}
Where the forth equality uses the $G$-equivariance of $u_t$.
We can therefore conclude $\psi_t(g\cdot x)=g\cdot \psi_t(x)$ for every $x\in\gX, g\in G$ and $t\in\brac{0,1}$, which prove that $\psi_t$ is $G$-equivariant.

It remains to show that $p_t$ defines an invariant probability path.
\begin{align*}
    p_t(g\cdot x)&=p_0(\psi^{-1}_t(g\cdot x))\det\brac{\frac{\partial \psi^{-1}_t}{\partial x}(g\cdot x)}\\
    &=p_0(\psi^{-1}_t(x))\det\brac{\frac{\partial \psi^{-1}_t}{\partial x}(g\cdot x)}\\    &=p_0(\psi^{-1}_t(x))\det\brac{g\cdot\frac{\partial \psi^{-1}_t}{\partial x}(x)\cdot g^{-1}}\\
    &=p_0(\psi^{-1}_t(x))\det\brac{\frac{\partial \psi^{-1}_t}{\partial x}(x)}=p_t(x)\\
\end{align*}
The second equality follows from the $G$-equivariance of $\psi_t^{-1}$ and the $G$-invariance of $p_0$. The third equality is a consequence of the definition of the Jacobian matrix for equivariant functions, and the final equality relies on standard properties of the determinant.
\subsection{Proof of  \texorpdfstring{\cref{lemma:symmetric_flow}}{}}\label{app:symmetric_flow}
\begin{proof}
    The first part of the proof, which involves showing that $\psi_t$ is $G$-symmetric, is straightforward and follows directly from the equivariance properties of $\psi_t$. Since $\psi_t$ is $G\times S_n$ (and the group actions commute) it trivial to see that it is equivariant to each of the groups separately. Let the $g \in G$ then: 
    \begin{equation*}
        g\cdot\psi_t(c)=\psi_t(g\cdot c) = \psi_t(\sigma \cdot c)=\sigma \cdot \psi_t(c)
    \end{equation*}
    Where the first equality follows from the $G$-equivariance of $\psi_t$, the second holds because $c$ is $G$-symmetric,  and the final equality follows from the $S_n$-equivariance of $\psi_t$.
    From the above equation, we also conclude that $c$ and $\psi_t(c)$ are mutually $G$-symmetric. Now, let $g'\in G_{f_i}$, meaning the $g'$ belongs to the site-symmetry group of $f_i$, the $i^{\text{th}}$ fractional coordinate of $c$. Since $g'\in G$ there exist a permutation $\sigma'\in S_n$ s.t $g'\cdot c=\sigma'\cdot c$. Moreover, because $g'\in G_{f_i}$, the permutation must fix the index $i$, $\sigma'(i)=i$. From the previous part of the proof, we know that $g'\cdot\psi_t(c)=\sigma' \cdot \psi_t(c)$ and since $\sigma'(i)=i$,  it follows that $g'\in G_{f'_i}$, where $f'_i$ is the $i^{\text{th}}$ fractional coordinate of $\psi_t(c)$. Therefore,
    $\psi_t(c)$ retains the same site-symmetry structure as $c$, and is thus also $\gW$-constructable.

\end{proof}
\subsection{Proof of  \texorpdfstring{\cref{lemma:W_to_G}}{}}\label{app:W_to_G}
let $g\in G$, our goal is to show that there exists a permutation $\sigma \in S_n$ such that $g\cdot\mF_0 = \sigma \cdot\mF_0$. Since $F_0$ is $\gW$-constructable it can be written as a union of orbits under the action of $G$. Focusing on a single orbit generated by $w_i$, and denote it as $\mF_0^{w_i}$ we can observe that $g\cdot \mF_0^{w_i}=\sigma' \cdot \mF_0^{w_i}$ for some $\sigma'\in S_{\abs{w_i}}$. This holds because the action of a group element on an orbit is a bijection. Repeating this process for each orbit contained in $\mF_0$ yields a construction for the permutation $\sigma$.
\end{proof}
\subsection{Proof of  \texorpdfstring{\cref{lemma:efficient_GA}}{}}
\begin{proof}
    \begin{equation*}
        \hat{u}_t(c| G)= \sum_{g\in G}g\cdot u_t(g^{-1}\cdot c) = \sum_{g\in G}g\cdot u_t(\sigma_{g^{-1}| c}\cdot c) = \sum_{g\in G}g\cdot \sigma_{g^{-1}| c}\cdot u_t(c)
    \end{equation*}
the second equation comes from the $G$-symmetry of $c$ and the last comes from the $S_n$ equivariance of $u_t$.
\end{proof}

\section{Conditional Flow on Fractional Coordinates}\label{app:fractional_coordinates_flow}
\begin{lemma}
    Let $G$ be a space group, the flat tori logarithmic map $\log_{\mF_0}(\mF_1)$ is $G\times S_n$ equivariant.
\end{lemma}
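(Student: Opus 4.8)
The plan is to verify the defining equivariance relation
$\log_{(g,\sigma)\cdot\mF_0}\!\big((g,\sigma)\cdot\mF_1\big) = (g,\sigma)^\star\log_{\mF_0}(\mF_1)$
directly from the definition of the element-wise flat-tori log map, handling the two factors $G$ and $S_n$ separately and then combining them. Since $\sigma\in S_n$ acts by permuting the rows (atoms) of $\mF_0,\mF_1$ while $g=(\mR,\tau)\in G$ acts within each row (spatially), and these two actions commute (as recalled earlier via \citet{puny2021frame}), it suffices to check equivariance for each factor; the product case then follows because the log map is computed row-by-row.

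The $S_n$ part is immediate. Because $\log_{\mF_0}(\mF_1)$ is applied independently to each row, permuting the rows of both arguments by a permutation matrix $\mP$ simply permutes the rows of the output, giving $\log_{\mP\mF_0}(\mP\mF_1)=\mP\log_{\mF_0}(\mF_1)$, which is exactly $S_n$-equivariance with the tangent-space representation $\sigma^\star=\mP$.

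The substance is the $G$ part, which I would reduce to a single atom $f_0,f_1\in[0,1)^3$. First I would record that the flat-tori log map returns the representative of $f_1-f_0$ inside the Voronoi cell $[-\tfrac12,\tfrac12)^3$, so in particular it depends only on $f_1-f_0\pmod{\mathbb Z^3}$. Next, writing $g\cdot f=\mR f+\tau-\lfloor \mR f+\tau\rfloor$, I would compute $g\cdot f_1-g\cdot f_0=\mR(f_1-f_0)+\kappa$ with $\kappa\in\mathbb Z^3$, since the translations $\tau$ cancel and the two floor terms are integer vectors; hence $g\cdot f_1-g\cdot f_0\equiv \mR(f_1-f_0)\pmod{\mathbb Z^3}$. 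Combining these two observations reduces the claim to showing that taking the minimal (Voronoi) representative commutes with $\mR$, i.e.\ $\mathrm{reduce}(\mR v)=\mR\,\mathrm{reduce}(v)$ for $v=f_1-f_0$, which then yields $\log_{g\cdot f_0}(g\cdot f_1)=\mR\log_{f_0}(f_1)=g^\star\log_{f_0}(f_1)$ with $g^\star:(\mR,\tau)\mapsto\mR$.

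The main obstacle is precisely this last commutation, and it is where the structure of a space group must enter. The point-group part $\mR$, expressed in the fractional basis, is an automorphism of the lattice $\mathbb Z^3$, and more importantly it acts as an isometry of the unit-cell torus, so it maps the fundamental Voronoi domain onto itself; this is exactly what makes $\mathrm{reduce}$ commute with $\mR$ and prevents the wrap-around floor from breaking equivariance. For the orthogonal crystal systems $\mR$ is a signed permutation matrix and the commutation is transparent, so the care needed is in invoking the isometry property uniformly enough to cover all $230$ space groups. Once this commutation is established, assembling the $G$ and $S_n$ pieces gives the full $G\times S_n$ equivariance with representation $(g,\sigma)^\star:v\mapsto \mP\,v\,\mR^{T}$.
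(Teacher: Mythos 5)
Your proposal is correct and follows essentially the same route as the paper: it handles $S_n$ trivially because the map is row-wise, reduces the $G$ part to a single point, observes that $g\cdot f_1-g\cdot f_0\equiv\mR(f_1-f_0)\pmod{\mathbb Z^3}$ since the translation and floor terms contribute only integer vectors, and closes by arguing that the reduction to the fundamental domain commutes with $\mR$ because $\mR$ preserves $\mathbb Z^3$ and acts isometrically (the paper phrases this as $\norm{v}=\norm{\mR v}$ forcing $v'=\mR v$). You even flag the same delicate point the paper leans on implicitly, namely that the isometry property must be interpreted correctly for non-orthogonal crystal systems.
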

\begin{proof}
    let $g\in G$ such that $g=(\mR,\tau)$.  Since the orthogonal components of $G$ maps the crystal to itself, it preserve the lattice structure. combining with the following lemma (which is expressed with respect to a single point) we get that logarithmic maps is equivariant with respect to the space group and that the representation the acts on the output domain includes only the orthogonal part without the translation. The $S_n$ equivariance is trivial for an element wise function.
\end{proof}
\begin{lemma}
    Let $g\in G$ such that $g=(\mR,\tau)$. if $\mR$ maps $\sZ^3$ to itself $\mR\log_{x}(y) = \log_{g\cdot x}(g\cdot y)$.
\end{lemma}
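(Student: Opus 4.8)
The plan is to reduce everything to the single defining feature of the element-wise flat-torus logarithmic map (\cref{eq:logmap}): it depends on its two arguments only through the displacement $y-x$ taken modulo the integer lattice $\sZ^3$. Concretely, writing $\mathrm{wrap}(v)\coloneqq v-\floor{v+\tfrac12\one}$ for the element-wise reduction of $v\in\Real^3$ into the fundamental domain $[-\tfrac12,\tfrac12)^3$, the map satisfies $\log_x(y)=\mathrm{wrap}(y-x)$, so that $\log_x(y)$ is unchanged whenever $x$ or $y$ is shifted by an integer vector. I would record this integer-shift invariance as the starting point, since it is what lets the two floor operations — the one hidden in the group action and the one inside the log map — be handled uniformly.

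The core computation tracks those two wrappings and absorbs them into a single integer shift. Using $g\cdot x=\mR x+\tau-\floor{\mR x+\tau}$ and the analogous expression for $g\cdot y$, subtraction gives
\[
(g\cdot y)-(g\cdot x)=\mR(y-x)+m,\qquad m\coloneqq\floor{\mR x+\tau}-\floor{\mR y+\tau}\in\sZ^3 .
\]
Because the log map ignores integer shifts of its arguments, $\log_{g\cdot x}(g\cdot y)=\mathrm{wrap}\big((g\cdot y)-(g\cdot x)\big)=\mathrm{wrap}\big(\mR(y-x)\big)$. Next I would invoke the hypothesis $\mR\sZ^3=\sZ^3$: writing $y-x=\log_x(y)+p$ with $p=(y-x)-\mathrm{wrap}(y-x)\in\sZ^3$, linearity gives $\mR(y-x)=\mR\log_x(y)+\mR p$, and $\mR p\in\sZ^3$ by hypothesis, so $\mathrm{wrap}(\mR(y-x))=\mathrm{wrap}(\mR\log_x(y))$. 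This already proves equivariance up to reduction into the fundamental domain and identifies the output representation as $g^\star=\mR$, matching the homomorphism $(\mR,\tau)\mapsto\mR$.

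The final and main obstacle is discharging the outer $\mathrm{wrap}$, i.e.\ showing $\mathrm{wrap}(\mR\log_x(y))=\mR\log_x(y)$ so the equality is exact rather than merely modulo $\sZ^3$. Here I would use the geometric content hidden in the hypothesis, namely that the orthogonal part $\mR$ preserves the lattice structure (as noted in the preceding lemma), so the affine map $[x]\mapsto[\mR x+\tau]$ descends to an \emph{isometry} of the flat torus with constant differential $\mR$; the Riemannian logarithm is natural under isometries, $d\Phi_x(\log_x y)=\log_{\Phi(x)}(\Phi y)$, which yields the exact identity. Equivalently, at the element-wise level $\mR$ permutes the coordinate axes up to sign, hence commutes with the coordinate-wise $\mathrm{wrap}$ and maps $[-\tfrac12,\tfrac12)^3$ onto itself up to a measure-zero boundary. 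The delicate part is precisely this verification that $\mR$ genuinely respects the fundamental domain; the rest is the integer-shift invariance of the log map combined with the linearity of $\mR$ and the bookkeeping of the two floor operations.
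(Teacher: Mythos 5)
Your proof is correct and takes essentially the same route as the paper's: write both logarithms as displacements reduced modulo $\sZ^3$, absorb the floor/wrap operations into a single integer shift using $\mR\,\sZ^3=\sZ^3$, and conclude by arguing that $\mR\log_x(y)$ already lies in the fundamental domain. Your discharge of the final wrap is actually spelled out more carefully than the paper's (which appeals only to $v'\in[-\tfrac12,\tfrac12)^3$ and $\norm{v}=\norm{\mR v}$); both arguments implicitly use that $\mR$ is orthogonal and hence a signed permutation, and share the measure-zero boundary caveat you flag.
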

\begin{proof}
    Let $\log_{x}(y)=v$ that means that exist $z\in\sZ^3$ s.t $v=y-x + z$ where $v\in  [-\frac{1}{2},\frac{1}{2})^3$. Now lets assume $\log_{g\cdot x}(g\cdot y)=v'$, that means that there exist $z'\in\sZ^3$ s.t $v'=g\cdot y-g\cdot x + z'$. plugging in $g=(\mR,\tau)$ results in $v'=\mR(y-x)+z'$. combining both equations we get that $v'=\mR v+z''$ (because $\mR z\in\sZ^3$). Since $v'\in  [-\frac{1}{2},\frac{1}{2})^3$ and $\norm{v}=\norm{\mR v}$ we conclude that $v'=\mR v$.

\end{proof}
\begin{lemma}
    The conditional flow $\psi_t(\mF_0|\mF_1)$ is $G$-symmetric and $\gW$-constructable.
\end{lemma}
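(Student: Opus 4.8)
The plan is to reduce both claims to a single mutual $G$-symmetry statement: I will show that $\psi_t(\mF_0\mid\mF_1)$ is mutually $G$-symmetric with $\mF_0$, and then read off $G$-symmetry and $\gW$-constructability from this. At the outset I take for granted that $\mF_1$ is $G$-symmetric and $\gW$-constructable, and that the preprocessing (which fixes the \emph{same} ordering of the Wyckoff operators in $\gW$ used to build both $\mF_0$ and $\mF_1$) makes $\mF_0$ and $\mF_1$ mutually $G$-symmetric. Concretely this means that for each $g\in G$ there is a single permutation $\sigma\in S_n$ with $g\cdot\mF_0=\sigma\cdot\mF_0$ and $g\cdot\mF_1=\sigma\cdot\mF_1$; that $\mF_0$ is $G$-symmetric at all is exactly \cref{lemma:W_to_G}, and the matched orbit ordering upgrades this to mutual $G$-symmetry orbit by orbit.

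First I would fix $g=(\mR,\tau)\in G$ together with its associated permutation $\sigma$ and compute $g\cdot\psi_t(\mF_0\mid\mF_1)$. The key geometric input is that $g$ acts as an affine isometry of the flat torus whose differential is $g^\star=\mR$, so it carries the geodesic $t\mapsto\mF_0+t\,\log_{\mF_0}(\mF_1)$ to the geodesic from $g\cdot\mF_0$ to $g\cdot\mF_1$; written out, $g\cdot\psi_t(\mF_0\mid\mF_1)=g\cdot\mF_0+t\,\log_{g\cdot\mF_0}(g\cdot\mF_1)$, where the velocity has been transported using the equivariance $\log_{g\cdot\mF_0}(g\cdot\mF_1)=\mR\,\log_{\mF_0}(\mF_1)$ established in the preceding lemma. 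Substituting the mutual $G$-symmetry relations $g\cdot\mF_0=\sigma\cdot\mF_0$ and $g\cdot\mF_1=\sigma\cdot\mF_1$, and using that a row permutation commutes with the element-wise log map ($S_n$-equivariance), this collapses to $\sigma\cdot\mF_0+t\,\sigma\cdot\log_{\mF_0}(\mF_1)=\sigma\cdot\psi_t(\mF_0\mid\mF_1)$. Hence $g\cdot\psi_t(\mF_0\mid\mF_1)=\sigma\cdot\psi_t(\mF_0\mid\mF_1)$ with the \emph{same} $\sigma$, which is precisely mutual $G$-symmetry of $\psi_t$ with $\mF_0$; in particular $\psi_t(\mF_0\mid\mF_1)$ is $G$-symmetric.

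For $\gW$-constructability I would reuse the site-symmetry argument from \cref{lemma:symmetric_flow}. Because $\psi_t$ and $\mF_0$ realize each $g\in G$ by the same permutation $\sigma$, any $g'$ in the stabilizer $G_{f_i}$ of the $i$th coordinate of $\mF_0$ corresponds to a $\sigma'$ fixing the index $i$, and that same $\sigma'$ then fixes the $i$th coordinate of $\psi_t$; thus $g'\in G_{f_i'}$, where $f_i'$ is the $i$th coordinate of $\psi_t$. The site-symmetry structure of $\psi_t$ therefore matches that of $\mF_0$, and since $\mF_0$ is $\gW$-constructable the same holds for $\psi_t$.

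The step I expect to be the main obstacle is the torus bookkeeping inside the central computation: making rigorous the claim that the affine action $f\mapsto\mR f+\tau\ (\bmod\,\sZ^3)$ commutes with the geodesic interpolation, i.e.\ that $g\cdot(\mF_0+t v)$ reduces to $(g\cdot\mF_0)+t(\mR v)$ on the torus. This rests on $\mR$ mapping $\sZ^3$ to itself, so that the action descends to a well-defined torus automorphism and the integer shifts absorbed by the floor operation do not interfere, and on the returned velocity $\log_{\mF_0}(\mF_1)$ lying in $[-\tfrac12,\tfrac12)^3$ so that the straight segment is the genuine minimizing geodesic for all $t\in[0,1]$. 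Both facts are already in hand from the preceding lemmas, so the remaining work is purely careful accounting of the integer reductions rather than any new idea.
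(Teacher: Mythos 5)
Your proposal is correct and follows essentially the same route as the paper: compute $g\cdot\psi_t(\mF_0\mid\mF_1)$, push the group action through the interpolation using the $G$-equivariance of the flat-torus logarithmic map, substitute the mutual $G$-symmetry of $\mF_0$ and $\mF_1$ to obtain $\sigma\cdot\psi_t(\mF_0\mid\mF_1)$, and then inherit $\gW$-constructability from the site-symmetry argument of \cref{lemma:symmetric_flow}. You are somewhat more explicit than the paper about the standing hypotheses (the matched orbit ordering making $\mF_0$ and $\mF_1$ mutually $G$-symmetric, and the $S_n$-equivariance of the element-wise log), which is a welcome clarification rather than a different argument.
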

\begin{proof}
    \begin{align*}
       g\cdot\psi_t( \mF_0| \mF_1) &= (\mF_0+ t\log_{\mF_0}(\mF_1))R^T +\one_n\tau^T \\
       &= \mF_0\mR^T+\one_n\tau^T + t\log_{\mF_0}(\mF_1)R^T \\
       &= g\cdot \mF_0 + t\log_{g\cdot\mF_0}(g\cdot\mF_1) \\
       &= \sigma\cdot \mF_0 + t\log_{\sigma\cdot\mF_0}(\sigma\cdot\mF_1) \\
       & = \sigma\cdot\psi_t( \mF_0| \mF_1)
   \end{align*}
The fact that $\psi_t( \mF_0| \mF_1)$ is $\gW$-constructable follows directly from the proof in \cref{app:symmetric_flow} and the fact the $\psi_t( \mF_0| \mF_1)$ is mutually $G$-symmetric with $\mF_0$ and $\mF_1$. 
\end{proof}
\begin{figure}
    \centering
    \includegraphics[width=0.8\linewidth]{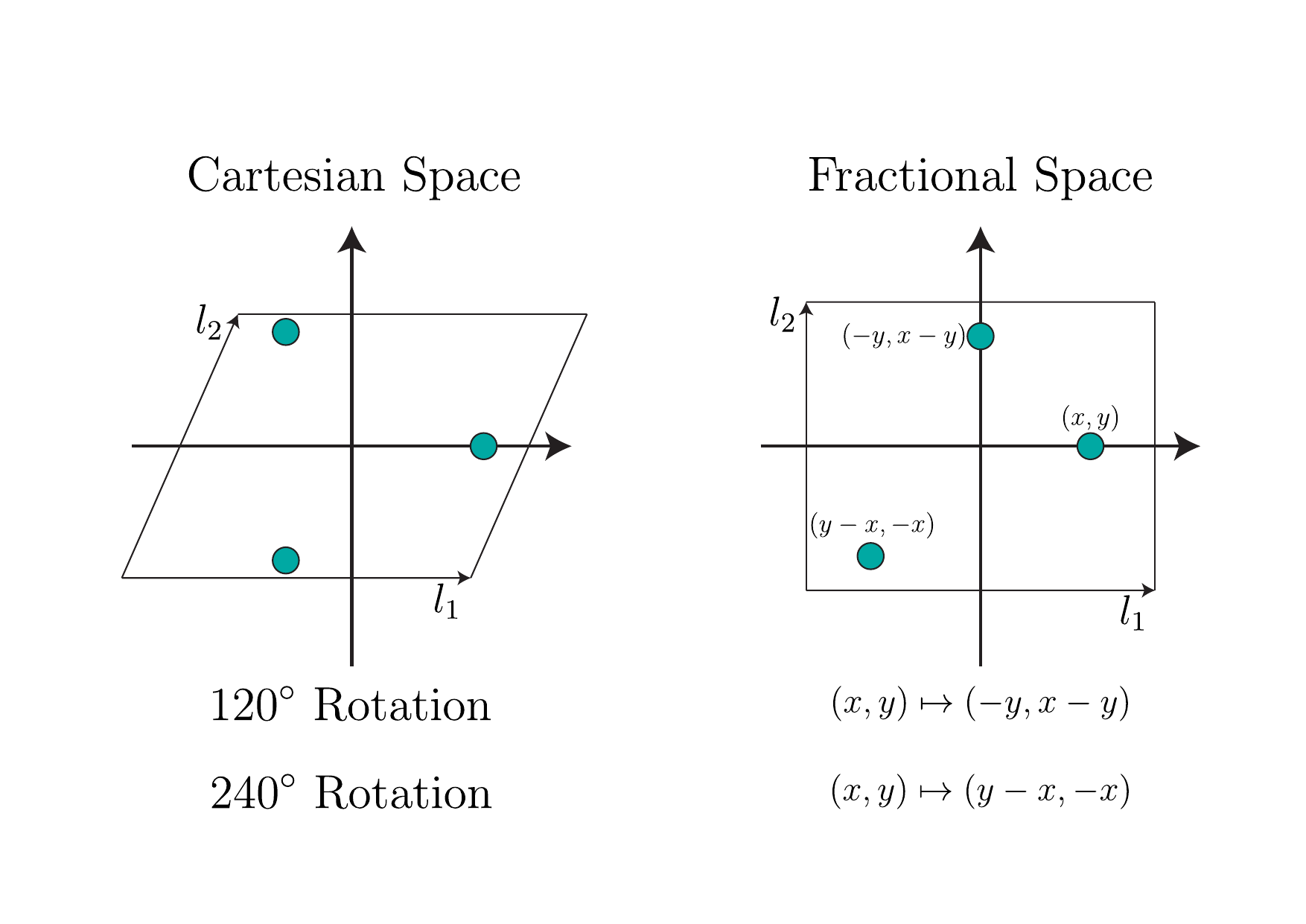}
    \vspace{-30pt}
    \caption{$2$D visualization of the deformed geometry induced by moving to fractional coordinates in a non-orthogonal lattices basis. This example demonstrates how a $3$-fold rotational space group becomes a set of special affine transformations when acting on fractional coordinates.}
    \label{fig:hex_example}
\end{figure}
\section{Lattice Representation}\label{app:lattice}
The lattice matrix $\mL \in \mathbb{R}^{3 \times 3}$ characterizes the geometry of the unit cell. When $\mL$ corresponds to a physically valid lattice, i.e., it has positive volume, it is invertible and can be decomposed to the product $\mL=\mQ\exp(\mS)$ where $\mQ\in\Real^{3\times 3}$ is an orthogonal matrix and $\mS\in \Real^{3\times 3}$ is a symmetric matrix. Representing the lattice parameters via $\mS$ enjoys the benefits of orthogonal invariance (any orthogonal transformation is added to $\mQ$), which makes this representation invariant to any space group operations. 
\citet{jiao2024space} suggested  representing $\mS$ using the coefficients of the following basis - 
\begin{align*}
\mB_1 = \begin{pmatrix}0&1&0\\1&0&0\\0&0&0 \end{pmatrix}, 
\mB_2 = \begin{pmatrix}0&0&1\\0&0&0\\1&0&0 \end{pmatrix},
\mB_3 = \begin{pmatrix}0&0&0\\0&0&1\\0&1&0 \end{pmatrix}, \\
\mB_4 = \begin{pmatrix}1&0&0\\0&-1&0\\0&0&0 \end{pmatrix},
\mB_5 = \begin{pmatrix}1&0&0\\0&1&0\\0&0&-2 \end{pmatrix}, 
\mB_6 = \begin{pmatrix}1&0&0\\0&1&0\\0&0&1 \end{pmatrix}.
\end{align*}
This basis enables clustering of the crystallographic space groups based on the basis coefficients used to represent $\mS$. \Cref{tab:l_cons} summarizes the lattice and coefficient constraints for each crystal family type.
\begin{table}[hbt]
\centering
\caption{Relationship between the lattice shape and the constraint of the symmetric bases.}
\label{tab:l_cons}
\begin{tabular}{cccc}
\toprule
   Crystal Family & Space Group No. & Lattice Shape &  Constraint of Symmetric Bases \\
\midrule\midrule
    Triclinic & $1\sim 2$  & No Constraint & No Constraint \\\midrule
    Monoclinic & $3\sim 15$   & $\alpha=\gamma=90^{\circ}$ & $k_1=k_3=0$ \\\midrule
    Orthorhombic & $16\sim 74$  & $\alpha=\beta=\gamma=90^{\circ}$ & $k_1=k_2=k_3=0$ \\\midrule
    \multirow[c]{2}{*}{Tetragonal} & \multirow[c]{2}{*}{$75\sim 142$}  & $\alpha=\beta=\gamma=90^{\circ}$ & $k_1=k_2=k_3=0$ \\
    & & $a=b$ & $k_4=0$ \\\midrule
    \multirow[c]{2}{*}{Hexagonal} & \multirow[c]{2}{*}{$143\sim 194$}  & $\alpha=\beta=90^{\circ}, \gamma=120^{\circ}$ & $k_2=k_3=0, k_1=-log(3)/4$ \\
    & & $a=b$ & $k_4=0$ \\\midrule
    \multirow[c]{2}{*}{Cubic} & \multirow[c]{2}{*}{$195\sim 230$}  & $\alpha=\beta=\gamma=90^{\circ}$ & $k_1=k_2=k_3=0$ \\
    & & $a=b=c$ & $k_4=k_5=0$ \\
    % Hexagonal & $143\sim 194$  & $\alpha=\beta=90^{\circ}, \gamma=120^{\circ}, a=b$ & $k_2=k_3=k_4=0, k_1=-log(3)/4$ \\
    % Cubic & $195\sim 230$  & $\alpha=\beta=\gamma=90^{\circ}, a = b = c$ & $k_1=k_2=k_3=k_4=k_5=0$ \\
\bottomrule
\end{tabular}
\vspace{-3ex}
\end{table}

\section{Density Functional Theory}
\label{app:dft}
Crystals exist in competition for stability between alternatives with the same composition. If one plots energy against composition, the lowest energy structures form a \emph{convex hull}. We say a crystal is thermodynamically stable if it is near or below this convex hull. Since we do not know all structures, there is epistemic uncertainty in this characterization. The difference between the energy of a crystal and this convex hull is denoted $E_{\text{hull}}$. We report $E_{\text{hull}}<100$ meV/atom and $E_{\text{hull}}<0$ meV/atom rates for stability metrics. These values are computed by prerelaxation with a machine learning interatomic potential \citep{barroso2024open} followed by relaxation and energy evaluation using density functional theory.

For the stability metrics, we applied the Vienna ab initio simulation package (VASP) \citep{kresse1996efficient} to compute relaxed geometries and ground state energies at a temperature of 0 K and pressure of 0 atm. 
We used the default settings from the Materials Project \citep{jain2013materials} known as the \texttt{MPRelaxSet} with the PBE functional \citep{perdew1996generalized} and Hubbard U corrections. These correspond with the settings that our prerelaxation network OMat24 \citep{barroso2024open} was trained on, so prerelaxation should reduce DFT energy, up to fitting error.

We did \emph{not} make any guesses about oxidation states! This deviates from the Materials Project which does make those guesses. For this reason, our energy above hull calculations for structures that need to consider oxidation state are slightly high, implying that we might be under-predicting stability. This applies to any stability result we calculated. We expect it to also be a negligible effect.

The results from DiffCSP++ WyFormer in \cref{tab:dng_results} were computed by \citet{kazeev2025wyckofftransformer} and differ slightly from ours. Specifically, they run a multiple relaxations to avoid errors that come from using a poor initial guess before relaxation. Since we prerelax with OMat24 we expect that double relaxation is unnecessary. Consult their work for further details, but we believe the differences are negligible for this purpose.

\section{Related Work}
\label{app:related_work_appendix}

As a continuation from \cref{sec:related_work}, we discuss other related work. We still limit the focus to the most-relevant parts of this large body of literature.

Our method resembles non-deep learning based methods that propose structures using Wyckoff positions as inductive bias \citep{glass2006uspex, pickard2011ab} and refine the atomic positions using density functional theory. This field is known as high-throughput screening of inorganic crystals and it is responsible for generating several important datasets of stable materials \citep{saal2013materials, kirklin2015open, wang2021predicting, schmidt2022large, schmidt2022alexandria_2}. Recently, those searches have been sped up by machine learning interatomic potentials that closely approximate density functional theory \citep{merchant2023scaling}.

Now we take a step further away conceptually to discuss methods that are tangentially related to ours. \emph{Crystal-GFN} \citep{milaai4science2023crystalgfn} is a G-flow network that uses the space group, but does not consider Wyckoff positions. Several other works generate crystals without considering multiple types of atom \citep{wirnsberger2022normalizing}, or molecule \citep{kohler2023rigid}. Additionally, there is a large and growing cannon of generative models for materials that do not have general space group equivariance \citep{xie2021crystal, yang2023scalable, zeni2023mattergen, miller2024flowmm, sriram2024flowllm, lin2024equivariant, joshi2025adit, hoellmer2025omat}.

\section{Model Details}\label{app:model_details}
\subsection{Architecture}
In this section, we present a comprehensive overview of our vector field model $\hat{u}_t(\cdot\,|\,G)$, along with the hyperparameters employed during training and generation across all experiments. The model takes as input a crystal $c = (k, \mF, \mA)$, where $f^i \in \Real^3$ denotes the $i^{\text{th}}$ fractional coordinate in $\mF$, and $a^i \in \set{0,1}^h$ represents the $i^{\text{th}}$ atom type indicator vector in $\mA$. The forward computation of $s$ layers model $\hat{u}(c,|,G)$ is defined by the following set of equations:
\begin{align*}
    a^{i}_{embed} &= \phi^a(a^{i}) \\ 
    t_{embed} &= \text{SinusoidalTimeEmbedding}(t) \\
    h^i_{(0)} &= \phi^\text{embed}(\brac{a^i_{embed},t_{embed}})\\
    m^{ij}_{(l)} &= \phi^\text{edge}_{(l)}(\brac{h^i_{(l-1)}, h^j_{(l-1)}, k,\text{SinusoidalEmbedding}(\log_{f^i}(f^j)),\frac{L^TL\log_{f^i}(f^j)}{\norm{L^TL\log_{f^i}(f^j)}}})\\
    m^i_{(l)} &= \frac{1}{n}\sum_{j=1}^n m^{ij}_{(l)} \\
    h^i_{(l)} &= \phi^\text{node}_{(l)}(\brac{h^i_{(l-1)},m^i_{(l)}})\\
    u^k_t(c_t) &=  \phi^k(\frac{1}{n}\sum_{j=1}^n h^{i}_{(s)}) \\
    (u^\mF_t(c_t))^i &=  \phi^{\mF}(h^i_{(s)}) \\
    (u^\mA_t(c_t))^i &=  \phi^{\mA}(h^i_{(s)}) \\
    u_t(c_t) &= (u^k_t(c_t),u^\mF_t(c_t),u^\mA_t(c_t)) \\
    \hat{u}_t(c_t\,|\, G)&= \sum_{g\in G}g\cdot \sigma_{g^{-1}| c}\cdot u_t(c_t)
\end{align*}
We denote $d$ as the hidden dimension of the model, $d_t$ as the Sinusoidal Time Embedding dimension, and $d_s$ as the Sinusoidal Embedding dimension. Next, we list the learnable modules constructing the model and denote their input and output dimension as $x\to y$.   $\phi^a$ is a linear layer $h\to d$, $\phi^\text{embed}$ is a linear layer $d+d_t\to d$ dimension $d$, $\phi^\text{edge}_{(l)}$ is $2$-layer Multi-layer Perceptron (MLP) $2d+d_t+9\to d$, $\phi^\text{node}_{(l)}$ is a $2$-layer Multi-layer Perceptron (MLP) $2d \to d$, $\phi^k$ is a linear layer $d\to 6$, $\phi^{\mF}$ is a linear layer $d\to 3$ and $\phi^{\mA}$ is a linear layer $d\to h$. The last equation represents the group averaging presented in \cref{eq:efficient_GA}. The flat tori logarithmic map is defined by the equation:
\begin{equation}\label{eq:logmap}
    \log_{f^{i}}(f^{j})= \frac{1}{2\pi} \operatorname{atan2}(\brac{\sin(f^{j} - f^{i}), \cos(f^{j} - f^{i})})
\end{equation}
\Cref{tab:arch} summarize  the hyperparameters used to train our SGFM models. Note that the same configuration was applied uniformly across all datasets and tasks. The hyperparameter search was performed on MP-$20$ (CSP) and the resulting settings were adopted for all other experiments.
\begin{table}[h]
\centering
\small
\renewcommand{\tabcolsep}{1.6pt}
\caption{Hyperparameter details for all the models reported in the paper. Hyperparameter (bottom row) search was conducted on the MP-$20$ dataset.}
\vspace{10pt}
\resizebox{\textwidth}{!}{%
\begin{tabular}{lcccccc}
\toprule
Dataset      & Number of Layers & $d$               & $d_t$                             & $d_s$                             & Activation            & Layer Norm                \\ \hline
\multicolumn{7}{c}{CSP}                                                                                                                                                         \\ \hline
MP-$20$      & $8$                & $512$             & $256$                             & $128$                             & SiLU                  & \checkmark \\
MPTS-$52$    & $8$                & $512$             & $256$                             & $128$                             & SiLU                  & \checkmark \\
Carbon-$24$  & $8$                & $512$             & $256$                             & $128$                             & SiLU                  & \checkmark \\
Perov-$5$    & $8$                & $512$             & $256$                             & $128$                             & SiLU                  & \checkmark \\
Alex-MP-$20$ & $8$                & $512$             & $256$                             & $128$                             & SiLU                  & \checkmark \\ \hline
\multicolumn{7}{c}{DNG}                                                                                                                                                         \\ \hline
MP-$20$      & $8$                & $512$             & $256$                             & $128$                             & SiLU                  & \checkmark \\ \hline
\multicolumn{7}{c}{Hyperparameter Range}                                                                                                                                        \\ \hline
MP-$20$      & $\{6,7,8,9\}$    & $\{128,256,512\}$ & \multicolumn{1}{c}{$\{128,256\}$} & \multicolumn{1}{c}{$\{128,256\}$} & \multicolumn{1}{c}{-} & \multicolumn{1}{c}{-}     \\ \bottomrule
\end{tabular}}\label{tab:arch}
\end{table}
\subsection{Training \& Generation}\label{app:train_details}
\begin{table}[htb]
\centering
\small
\renewcommand{\tabcolsep}{1.6pt}
\caption{Training hyperparameter details for all the models reported in the paper. Hyperparameter (bottom row) search was conducted per experiment.}
\vspace{10pt}
\resizebox{\textwidth}{!}{%
\begin{tabular}{lcccccc}
\toprule
Dataset      & Batch Size/GPU & Learning Rate              & Epochs                                             & $\lambda_{\mF}$                       & $\lambda_{\mA}$                       & $\lambda_k$                           \\ \hline
\multicolumn{7}{c}{CSP}                                                                                                                                                                                                                 \\ \hline
MP-$20$      & $64$           & $0.0005$                   & $5000$                                             & $100$                                 & -                                     & $1$                                   \\
MPTS-$52$    & $32$           & $0.0005$                   & $5000$                                             & $100$                                 & -                                     & $1$                                   \\
Carbon-$24$  & $64$           & $0.0005$                   & $8000$                                             & $100$                                 & -                                     & $1$                                   \\
Perov-$5$    & $256$          & $0.0005$                   & $1000$                                             & $100$                                 & -                                     & $1$                                   \\
Alex-MP-$20$ & $32$           & $0.0005$                   & $1250$                                             & $100$                                 & -                                     & $1$                                   \\ \hline
\multicolumn{7}{c}{DNG}                                                                                                                                                                                                                 \\ \hline
MP-$20$      & $64$           & $0.0007$                   & $5000$                                             & $100$                                 & $1$                                   & $1$                                   \\ \hline
\multicolumn{7}{c}{Hyperparameter Range}                                                                                                                                                                                                \\ \hline
-     & -              & $\{0.0002,0.0005,0.0007\}$ & \multicolumn{1}{l}{$\{1000,1250,2000,5000,8000\}$} & \multicolumn{1}{l}{$\{1,10,50,100\}$} & \multicolumn{1}{l}{$\{1,10,50,100\}$} & \multicolumn{1}{l}{$\{1,10,50,100\}$} \\ \bottomrule
\end{tabular}}\label{tab:train}
\end{table}
All of our models were trained using the \texttt{ADAM} optimizer \citep{kingma2014adam} on $8$ NVIDIA A$100$ GPUs. \Cref{tab:train} outlines the training configuration for each model, including the ranges explored during hyperparameter search. We employed a cosine annealing learning rate schedule with a minimum learning rate of $0.00001$. As described in \cref{sec:exp}, we applied inference anti-annealing to enhance generation quality.
This technique modifies the vector field by scaling it with a time-dependent function $s(t) = 1 + s't$, where $s' \in \Real^+$ is a hyperparameter. We defined separate annealing parameters for each crystal component: $s'_{\mF}$ and $s'_{k}$ (no annealing was applied to atom type prediction). For the CSP experiments, we set $s'_{\mF} = 3$, $s'_{k} = 3$, and for DNG, we used $s'_{\mF} = 5$, $s'_{k} = 3$. All datasets have $60/20/20$ train/validation/test split except Alex-MP-$20$ that has $80/10/10$ split.
\section{Baselines}\label{app:baseline}
We provide additional context on the core approach behind each baseline we compared against: \textit{CDVAE}, integrates a diffusion model with a variational autoencoder for crystal structure generation; \textit{ADiT}, which use latent-based diffusion model and train on additional information from the QM$9$ \citep{wu2018qm9} dataset;
\textit{FlowMM}, an application of Riemannian Flow Matching \citep{chen2024flowRiemann} that incorporates non-trivial geometries in the crystal representation space; 
\textit{FlowLLM}, combines FlowMM with a Large Language model that uses as base distribution generator.
\textit{OMatG}, leverages Stochastic Interpolants \citep{albergo2023stochasticinterpolants} for material generation; \textit{SymmCD}, operates on the asymmetric unit and incorporates Wyckoff positions as part of the generative process; \textit{DiffCSP++}, a diffusion-based model that conditions on space groups and projects each denoising step through Wyckoff position transformations; \textit{WyFormer}, which employs an autoregressive model to generate Wyckoff positions (conditioned on space group) and subsequently uses DiffCSP++ model for full structure generation to predict the structure; and finally, \textit{SGEquiDiff} a diffusion based model that enforce space group equivariance while working on the asymmetric unit.

\section{Running Time Ablations}\label{app:runtime}
This experimental ablation study aims to evaluate the efficiency of SGFM by comparing its performance during both training and generation against two baseline models: (1) a non-equivariant variant where the vector field does not incorporate group symmetry, and (2) a standard GA implementation as defined in \cref{eq:GA}. For each model, we measured the time required to train a single epoch on MP-$20$, as well as the time needed to generate a batch of 64  (with 500 generation steps). The training time was averaged over 10 epochs, while the generation time was averaged over 100 batches. Training was conducted on an NVIDIA RTX$8000$  using 8 GPUs, while generation was performed on a single NVIDIA A$10$ GPU. The results are summarized in \cref{tab:time_copmarison}. Due to memory constraints, the standard GA model was limited to a maximal batch size of $1$ per GPU. As training under these conditions was not feasible, we do not report generation timings for this model. The results highlight a significant efficiency gap between the SGFM implementation of GA and the standard version, while showing minimal difference compared to the non-equivariant model.

% \section{Large Language Models}
% We utilized of large language models (LLMs) to assist with language refinement and
% proofreading. No content, ideas, or analyses were produced by these tools. The usage was quite limited.

\end{document}